
\documentclass[10pt,twocolumn,letterpaper]{article}

\usepackage{cvpr}              

\usepackage{graphicx}
\usepackage{amsmath}
\usepackage{amssymb}
\usepackage{booktabs}
\usepackage{adjustbox}
\usepackage[accsupp]{axessibility}  

\usepackage[outercaption]{sidecap}

%
\usepackage[pagebackref,breaklinks,colorlinks]{hyperref}

\usepackage[capitalize]{cleveref}
\crefname{section}{Sec.}{Secs.}
\Crefname{section}{Section}{Sections}
\Crefname{table}{Table}{Tables}
\crefname{table}{Tab.}{Tabs.}

\usepackage{floatrow}


\newcommand{\shortname}{\textsc{Lafite}}
\newcommand{\shortnamen}{$\textsc{Lafite}_{\textsc{NN}}$}
\newcommand{\shortnameg}{$\textsc{Lafite}_{\text{G}}$}
\newcommand{\longname}{\textbf{LA}nguage-\textbf{F}ree~tra\textbf{I}ning~for~\textbf{T}ext-to-image~g\textbf{E}neration}

\newcommand{\lafetilogo}{\includegraphics[width=10px]{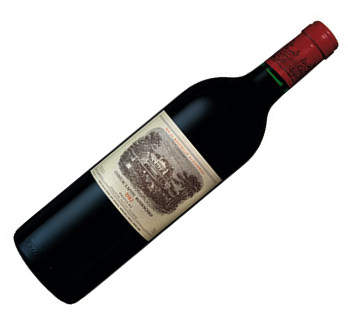}}


\usepackage{amsmath,amsfonts,bm}









\def\eqref#1{equation~\ref{#1}}









\def\1{\bm{1}}




\def\rvc{{\mathbf{c}}}

\def\rvh{{\mathbf{h}}}
\def\rvu{{\mathbf{i}}}

\def\rvs{{\mathbf{s}}}
\def\rvt{{\mathbf{t}}}
\def\rvu{{\mathbf{u}}}

\def\rvw{{\mathbf{w}}}
\def\rvx{{\mathbf{x}}}

\def\rvz{{\mathbf{z}}}







\DeclareMathAlphabet{\mathsfit}{\encodingdefault}{\sfdefault}{m}{sl}
\SetMathAlphabet{\mathsfit}{bold}{\encodingdefault}{\sfdefault}{bx}{n}


\def\gS{{\mathcal{S}}}

\def\gU{{\mathcal{U}}}

\def\gW{{\mathcal{W}}}

\def\gZ{{\mathcal{Z}}}














\usepackage{adjustbox}
\usepackage{booktabs}
\usepackage{hyperref}
\usepackage{url}
\usepackage{algorithm}
\usepackage{algorithmic}
\usepackage{wrapfig}
\usepackage{multirow}
\usepackage{xcolor}
\usepackage{amsthm}
\ifx\proof\undefined
\newenvironment{proof}{\par\noindent{\bf Proof\ }}{\hfill\BlackBox\\[2mm]}
\fi

\ifx\theorem\undefined
\newtheorem{theorem}{Theorem}
\fi
\ifx\example\undefined

\fi
\ifx\lemma\undefined

\fi

\ifx\corollary\undefined

\fi

\ifx\assumption\undefined

\fi

\ifx\definition\undefined

\fi

\ifx\proposition\undefined

\fi

\ifx\remark\undefined

\fi

\ifx\conjecture\undefined

\fi

\ifx\factoid\undefined

\fi

\ifx\axiom\undefined

\fi

\renewcommand{\eqref}[1]{\textup{{\normalfont(\ref{#1}}\normalfont)}}



\title{\shortname{} \includegraphics[width=18px]{figures/Lafite.png} : Towards Language-Free Training for Text-to-Image Generation}


\author{Yufan Zhou $^1$,
 ~
Ruiyi Zhang $^2$,
 ~
Changyou Chen $^1$,
 ~
Chunyuan Li $^3$,\\
 ~
Chris Tensmeyer $^2$,
 ~
Tong Yu $^2$,
 ~
Jiuxiang Gu $^2$,
 ~
Jinhui Xu $^1$\thanks{The research of the first and eighth authors was supported in part by NSF through grants IIS-1910492.},
 ~
Tong Sun $^2$\\

$^1$ State University of New York at Buffalo~~~~ 
$^2$ Adobe Research~~~~
$^3$ Microsoft Research, Redmond\\

{\tt\small \{yufanzho, changyou, jinhui\}@buffalo.edu}\\
{\tt\small \{ruizhang, tensmeye, tyu, jigu, tsun\}@adobe.com}~~~~ 
{\tt\small chunyl@microsoft.com}
}

\begin{document}
\maketitle

\begin{abstract}
    One of the major challenges in training text-to-image generation models is the need of a large number of high-quality image-text pairs. While image samples are often easily accessible, the associated text descriptions typically require careful human captioning, which is particularly time- and cost-consuming. In this paper, we propose the first work to train text-to-image generation models {\em without any text data}. Our method leverages the well-aligned multi-modal  semantic space of the powerful pre-trained CLIP model: the requirement of text-conditioning is seamlessly alleviated via generating text features from image features. Extensive experiments are conducted to illustrate the effectiveness of the proposed method. We obtain state-of-the-art results in the standard text-to-image generation tasks. Importantly, the proposed language-free model outperforms most existing models trained with full image-text pairs. Furthermore, our method can be applied in fine-tuning pre-trained models, which saves both training time and cost in training text-to-image generation models.
   Our pre-trained model obtains competitive results in zero-shot text-to-image generation on the MS-COCO dataset, yet with around only 1\% of the model size and training data size relative to the recently proposed large DALL-E model.
\end{abstract}

\section{Introduction}

Automatic synthesis of realistic images from arbitrary text description is one of the core aspirations in artificial intelligence.
Most existing works achieve the goal by consuming a large number of high quality image-text pairs~\cite{xu2018attngan, zhu2019dm, zhang2021crossmodal, ramesh2021zero, ding2021cogview}, which, however, often requires heavy workload of precise human captioning and filtering. For instance, MS-COCO~\cite{lin2014microsoft}, the most commonly used dataset in text-to-image generation tasks, requires over 70,000 worker hours in gathering and annotating the captions. Even for less curated datasets such as Google Conceptual Captions \cite{Sharma2018ConceptualCA}, it consists of 3.3 million image-text pairs that are heavily filtered from 5 billion images from around 1 billion English webpages. In practice, for a customized domain, it is infeasible to collect such a large number of image-text pairs for model training, due to the high cost of human captioning and filtering. This challenge renders the unprecedented importance of the zero-shot text-to-image generation tasks, where no domain-specific image-text pairs are used to train a model to generate images in a given domain. 



Recently, several attempts have been made to tackle zero-shot text-to-image generation problem, by pre-training giant generative models on web-scale image-text pairs, such as  DALL-E \cite{ramesh2021zero} and CogView \cite{ding2021cogview}. Both are auto-regressive Transformer models built for zero-shot text-to-image generation, as they can generate corresponding images given arbitrary text description without training on domain-specific datasets. However, to ensure good performance, these models require a gigantic scale of  {\em data collections}, {\em model size} and {\em model training}. Specifically, DALL-E contains over 12 billion parameters and is trained on a dataset consisting of 250 million image-text pairs; CogView is a model with 4 billion parameters trained on 30 million image-text pairs. 
For this reason, hundreds of GPUs are required in training these models, which significantly increases carbon footprint and decrease the inclusivity: making it extremely difficult for more researchers to participate the study of this topic.


It is therefore desired to provide affordable solutions to build text-to-image generation models for the settings of limited image-text pair data, by reducing the requirements on model size, data collections and model training. In terms of data collections, in the ideal scenarios, the \textit{language-free} setting is probably the minimal and cheapest requirement, where only image data is provided.
This is important because collecting only image data is much easier than constructing high-quality image-text pairs, given the ample domain-specific image datasets available online.

\begin{figure*}[t!]
	\vspace{-0mm}\centering
	\begin{tabular}{c c c}
		\hspace{-4mm}
		\includegraphics[height=3.05cm]{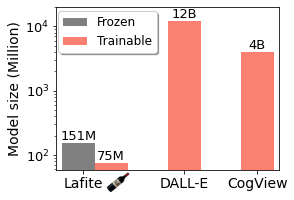}  & 
		\hspace{-0mm}
		\includegraphics[height=3.05cm]{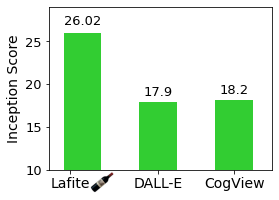}  & 
		\includegraphics[height=3.05cm]{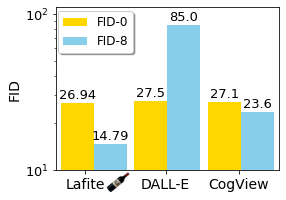}  \\
	\end{tabular}
	\vspace{-3mm}
	\caption{Model size {\em vs} performance of zero-shot image-to-text generation on the COCO dataset. \shortname{} has much smaller model size, especially when considering trainable parameters (Left figure), but shows higher Inception score (Middle figure) and lower FID  (Right figure). Please refer to Section~\ref{sec:exp} for details.
	 }
	\vspace{-0mm}
	\label{fig:hero_cmp}
\end{figure*}

To this end, we propose \shortname{}\footnote{\lafetilogo{} \longname{}}\lafetilogo{}, a generative adversarial approach to significantly lowering the cost barrier and to building efficient text-to-image generation models, based on the pre-trained CLIP model \cite{radford2021learning}. 
Specifically, $(i)$ we take advantages of CLIP's property on image-text feature alignment in the joint semantic space, to construct  pseudo image-text feature pairs; 
 $(ii)$ we propose a text-to-image GAN (Generative Adversarial Network) model~\cite{goodfellow2014generative} that can effectively leverage pseudo image-text feature pairs. Our major contributions can be summarized as followings:


\begin{itemize}
    \item We propose \shortname{}, a versatile system that works effectively in a large range of text-to-image generation settings, including language-free, zero-shot and fully-supervised learning. 
    
    \item To the best of our knowledge, \shortname{} is the first work that enables the language-free training for the text-to-image generation task. We propose two novel schemes to construct pseudo image-text feature pairs, and conduct comprehensive study for the new setting. The effectiveness is validated with quantitative results on several datasets with different training schemes (training from scratch and fine-tuning from pre-trained generative models).
    
    \item In zero-shot text-to-image generation settings, \shortname{} outperforms the prior art  DALL-E and CogView on the COCO benchmark, with less than 1\% of the trainable model parameter size (with frozen CLIP model weights). Please see Figure~\ref{fig:hero_cmp} for comparisons.
    \item In the standard fully supervised settings, \shortname{} outperforms several state-of-the-art (SoTA) methods by a large margin. Surprisingly, even our language-free model shows superior performance than most existing models that are trained with full image-text pairs.
    
    
\end{itemize}

\section{Related Work}

\paragraph{Text-to-image generation}
Existing models on text-to-image generation can be categorized into two classes:
fully-supervised text-to-image generation \cite{xu2018attngan,zhu2019dm, zhang2021crossmodal} 
and
zero-shot text-to-image generation \cite{ramesh2021zero, ding2021cogview}. 
The SoTA in the full image-text pair setting is still dominated by GAN variants~\cite{xu2018attngan,zhu2019dm,zhang2021crossmodal}. GANs~\cite{goodfellow2014generative} have inspired many advances in image synthesis~\cite{mirza2014conditional,karras2017progressive,liu2017unsupervised,li2017alice,karras2019analyzing}. For text-to-image synthesis, the improved model performance is often benefited from large generative adversarial image models~\cite{zhang2021crossmodal} and pre-trained text encoders~\cite{liu2019roberta}. Recently, excellent zero-shot text-to-image generation performance has been achieved in DALL-E \cite{ramesh2021zero} and CogView \cite{ding2021cogview}. The basic idea is to encode images into discrete latent tokens using VQ-VAE\cite{van2017neural, razavi2019generating}, and pre-train a huge-size auto-regressive Transformers\cite{vaswani2017attention} to predict these discrete tokens based on paired text sequences. Our \shortname{} is the first generative adversarial approach that achieves SoTA on zero-shot generation.

\paragraph{Multi-modal feature learning}
Learning a joint and aligned feature space for vision-and-language has been a long standing problem in artificial  intelligence~\cite{weston2010large,socher2010connecting}. Inspired by the BERT model~\cite{devlin2018bert}, a number of methods attempt to learn generic multi-modal fusion layers, given the pre-extracted visual region features  and textual encoder~\cite{lu2019vilbert,li2020oscar,su2019vl,zhang2021vinvl,kim2021vilt,li2021align}. These works aim at learning generic multi-modal representations for downstream tasks like visual question answering~\cite{antol2015vqa,hudson2019gqa}, image captioning~\cite{lin2014microsoft,agrawal2019nocaps}, visual commonsense reasoning~\cite{zellers2019recognition}. Unlike the aforementioned works, another line of works focus on the way of learning visual representation from natural language supervisions, including both generative~\cite{desai2021virtex} and discriminative~\cite{wang2016learning, wang2018learning, zhang2020contrastive} methods. The latter learns an aligned visual-semantic space. This idea is recently scaled up in CLIP/ALIGN~\cite{radford2021learning, jia2021scaling}, which pave the way toward building a {\em universal} image-text representation space. Our \shortname{} is built up in this universal space, and is the first one to leverage its multi-modal alignment property for language-free text-to-image generation.


\paragraph{CLIP for generation/manipulation.} The idea of multi-modal feature space also inspires some recent works on generative models~\cite{galatolo2021generating,patashnik2021styleclip,gal2021stylegan,pakhomov2021segmentation}. All of these works are related to ours in that the tools of pre-trained CLIP model and StyleGAN2 are employed. Our \shortname{} is different in two aspects:
$(i)$ The motivations and scenarios are different. Existing works focus on latent optimization~\cite{galatolo2021generating}, image manipulation~\cite{patashnik2021styleclip}, domain adaptation~\cite{gal2021stylegan}, image segmentation~\cite{pakhomov2021segmentation}. We present the first study on training text-to-image generation models without the requirement of paired captions.
$(ii)$ The techniques are different. Though image-text feature alignment property is leveraged in all works, Our \shortname{} is the only one to generate pseudo features pairs in the joint multi-modal space, none of existing works considers such a possibility.

\section{\shortname{}: A Language-Free Paradigm}

A natural idea to avoid human captioning in constructing image-text pair training data is using an off-the-shelf image captioning model that can automatically generate captions for the collected training images. However, this is especially challenging due to the lack of a universal captioning model that can $(i)$ bridge the modality gap between text and image to generate high-quality captions; $(ii)$ generalize to diverse image domains with large domain gaps. In this paper, we resort to solving an easier problem: one may directly {\em generate text features} rather than text descriptions, to avoid the use of image captioning models.

Throughout the paper, $(\rvx, \rvt)$ denotes an image-text pair, $\rvx^\prime$ is the corresponding generated image of $\rvt$. $G$ and $D$ denote the generator and discriminator respectively.
We use $f_{\text{img}}$ and $f_{\text{txt}}$ to denote the pre-trained text encoder and image encoder, which map text descriptions and image samples into a joint multi-modal feature space.  $\rvh=f_{\text{txt}}(\rvt)$ denotes the real text feature, $\rvz \sim \mathcal{N}(\mathbf{0}, \mathcal{I})$ denotes latent noise sampled from the standard Gaussian distribution, serving as one input of the generator. Our idea to achieve language-free training is to generate pseudo text features $\rvh^\prime$, which aims to approximating $\rvh$, by leveraging the image-text feature alignment of a pre-trained model. The generated features are then fed into the text-to-image generator to synthesize the corresponding images. Without loss of generality, we denote the mapping from input data to the  multi-modal feature space as {\em translator} $T$ in two settings. If only images $\rvx$ are provided (\ie language-free setting), we consider a  pseudo text-feature generation process $T:\rvx\rightarrow \rvh^\prime$; If image-text pairs  $(\rvx, \rvt)$ are provided (\ie standard fully-supervised settings), we  encode ground-truth text, $T: \rvt \rightarrow \rvh$.

\subsection{Pseudo Text-Feature Generation}\label{sec:text_feature}


To achieve the goal, a universal multimodal feature space is desired, where features of paired texts and images are well aligned. The recently  vision-and-language models such as CLIP and ALIGN achieve this, by pre-training on hundreds/thousands of millions of image-text pairs using contrastive learning. The cosine similarity between matched image-text features is maximized, while cosine similarity of the mis-matched pair is minimized. This naturally provides a high-dimensional hyper-sphere\footnote{In our implementation, we normalize the features extracted with CLIP by their L2 norm.}
for the multimodal features,  where paired image-text should be close to each other, with a small angle between their feature vectors.  
This inspires us to explore the potentials of generating pseudo text features $ \rvh^\prime \in \mathcal{H}(\rvx)$ for a given image $\rvx$ on this hyper-sphere: $\mathcal{H}(\rvx) = \{ \rvh^\prime | \mathrm{Sim}(\rvh^\prime, f_{\text{img}}(\rvx)) \geq c \}$, where $\mathrm{Sim}$ denotes cosine similarity, $c>0$ is a threshold. This idea is illustrated in Figure~\ref{fig:clip_feature}. Based on the analysis, we consider two schemes to generate pseudo text features.


\begin{SCfigure}
\includegraphics[width=0.36\linewidth]{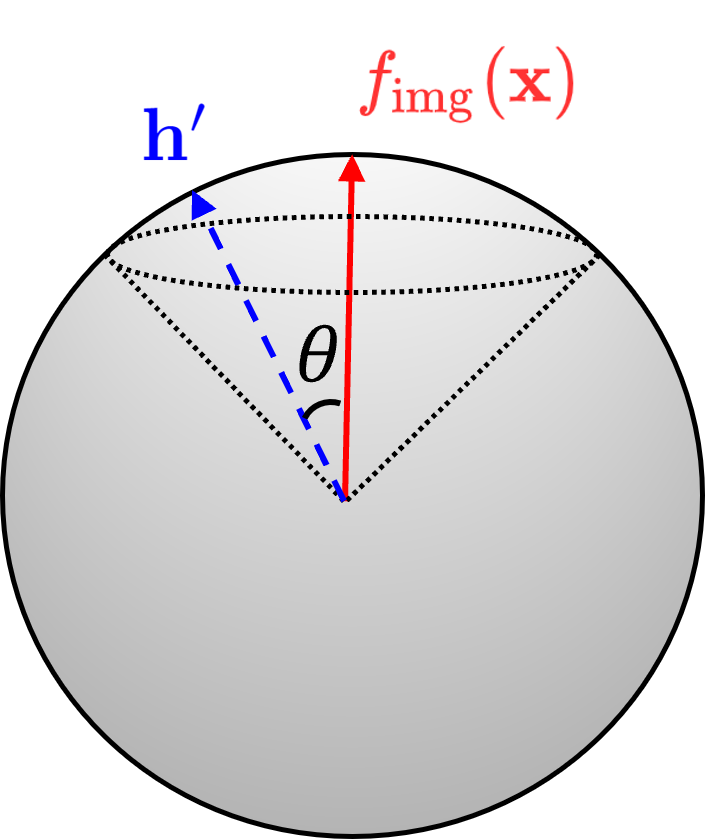} \hspace{0.2cm}
\caption{The illustration that the generated pseudo text feature vector $\rvh^\prime \in \mathcal{H}(\rvx)$ ({\color{blue}blue dashed arrow}) should have high cosine similarity with the image feature $f_{\text{img}}(\rvx)$ ({\color{red}red solid arrow}), \ie $\theta \leq \arccos{c}$.
    } \label{fig:clip_feature}
\end{SCfigure}




\paragraph{Fixed perturbations}


To generate pseudo text feature $\rvh^\prime$,
we propose to perturb the image feature $ f_{\text{img}}(\rvx)$ with adaptive Gaussian noise:
\begin{align}\label{eq:adaptive_gaussian}
    \rvh^\prime = \tilde{\rvh} / \Vert \tilde{\rvh} \Vert_2, ~~~ \tilde{\rvh} = f_{\text{img}}(\rvx) + \xi \mathbf{\epsilon} \Vert  f_{\text{img}}(\rvx) \Vert _2  /\Vert \mathbf{\epsilon}\Vert_2,
\end{align}
where $\mathbf{\epsilon} \sim \mathcal{N}(\mathbf{0}, \mathbf{I})$ is the Gaussian noise, $\xi>0$ is a fixed hyper-parameter representing the level of perturbations, $\Vert \cdot \Vert_2$ denotes L2 norm.
The added Gaussian noise is {\em adaptive} in the sense that it is normalized to a hyper-sphere, then re-scaled by the norm of image feature. 
We can prove that, with the adaptive noise, our \shortnameg{} can generate $\mathcal{H}(\rvx)$ with a high probability which depends on $\xi, c$ and $d$.  The formal theorem and its proof are provided in the Appendix.

\paragraph{Trainable perturbations}
It is natural to extend \shortnameg{} to learn more adaptive noise instead of using a vanilla Gaussian.
To this end, we propose to train an {\em inference} model which takes the image features as inputs and outputs the mean and variance of the desired noise distribution. Specifically, the inference model consists of two neural networks $r_{1}(\cdot)$ and $ r_{2}(\cdot)$. 
With the re-parameterization trick \cite{kingma2013auto}, the generation of pseudo text features is:
\begin{align}\label{eq:txt_fts_nn}
    \rvh^\prime & = \tilde{\rvh} / \Vert \tilde{\rvh} \Vert_2, \text{where}  \\ \tilde{\rvh} & = f_{\text{img}}(\rvx) + r_{1}(f_{\text{img}}(\rvx)) + \mathbf{\epsilon} \odot \exp(r_2(f_{\text{img}}(\rvx))),~ \nonumber
\end{align}
where $\exp$ denotes element-wise exponent operation, and $\odot$ denotes element-wise multiplication, $\mathbf{\epsilon} \sim \mathcal{N}(\mathbf{0}, \mathbf{I})$ denotes noise sampled from standard Gaussian. In practice, we construct $r_{1}(\cdot) $ and $ r_{2}(\cdot)$ with 4 fully-connected (FC) layers respectively, and train them in a supervised way by maximizing the cosine similarity $\mathrm{Sim}(\rvh^\prime, \rvh)$ between generated text features and real text features.

\paragraph{Discussion.} Both schemes have their own pros and cons. The trainable perturbation generally yields better performance than the fixed perturbation. However, the fixed perturbation is easier to use, without the requirement of training an inference model on an additional dataset with annotated image-text pairs. Further, the performance of trainable perturbation is influenced by the gap between datasets used in training the inference model and the generative model, as empirically verified in our ablation studies in the experiment section.


\subsection{Network Architectures}

We propose to adapt the unconditional StyleGAN2 to a conditional generative model for our goal. 
Note that although we discuss our model in a language-free setting, it can be directly generalized to standard text-to-image generation by using $\rvh$ (real text feature) instead of $\rvh^\prime$ (pseudo text feature). 
\paragraph{Generator} It is shown in recent works \cite{liu2020style, wu2021stylespace} that the \textit{StyleSpace} of StyleGAN2 is a well-disentangled intermediate feature space, whose dimensions are highly independent. By leveraging this property, we propose a simple yet effective approach to enable conditional generation: injecting new conditional information directly into the StyleSpace, as illustrated in Figure~\ref{fig:generator}. Specifically, we choose to inject text information as follows. 
$(i)$ 
Random noise vectors $\rvz \in \gZ $ are transformed into an intermediate latent space $\gW$ via a so-called mapping network, which consists of a sequence of FC layers.
The $\gW$ space is claimed to better reflect the disentangled nature of the learned distribution. Each 
$\rvw \in \gW $ is further transformed to channel-wise {\em unconditional style codes} $\rvs$, using a different learned affine transformation for each layer of the generator. The space spanned by these style parameters is often  referred to as StyleSpace, or $\gS$.
$(ii)$ For a conditional vector $\rvh^\prime$ from the image-text joint semantic space of CLIP, it is transformed into {\em condition codes} $\rvc$, using a different learned 2-layer FC network for each generator layer. 
$(iii)$ At each layer of the generator, we concatenate its style and conditional codes to obtain $[\rvs, \rvc]$, which is 
is further transformed to channel-wise {\em conditional style codes} $\rvu$ , using a different learned affine transformation for each generator layer. We refer to the space spanned by these style parameters as {\em  Conditional StyleSpace}, or $\gU$. In sum, the generator $G$ synthesizes a fake image as:
\begin{align}\label{eq:generator}
    \rvx^\prime = G( \rvh^\prime, \rvz) 
\end{align}
\paragraph{Discriminator} In the text-to-image task, the discriminator ensures the generated image to satisfy two criterias: photo-realistic to human perception and fidelity to the text condition. 
To this end, we encode the input image $\rvx$ with a shared discriminator backbone, then perform two tasks (each with a task-specific FC layer), as illustrated in Figure \ref{fig:training_obj}. 
$(i)$ $f_{\text{d}}(\rvx)$ projects $\rvx$ into a scalar, indicating the level of true or fake of an input image $\rvx$. This is a common task shared in all GAN models;
$(ii)$ $f_{\text{s}}(\rvx)$ 
embeds $\rvx$ into a semantic space, which is expected to be similar to the semantic space of CLIP. We compute the inner product $\langle \rvh^\prime, f_{\text{s}}(\rvx)\rangle$ to indicate how well the input image $\rvx$ is semantically aligned/conditioned with the pseudo text feature. In summary, the discriminator output is defined as:
\begin{align}\label{eq:logit}
    D(\rvx, \rvh^\prime) = 
     \underbrace{f_{\text{d}}(\rvx)}_{\text{real or fake}} + 
    \underbrace{\langle \rvh^\prime, f_{\text{s}}(\rvx) \rangle}_{\text{semantic alignment}}~,
\end{align}
Intuitively, $D(\rvx, \rvh^\prime)$ yields a high value for an image $\rvx$, when it is real (with large $f_{\text{d}}(\rvx)$ values) and the semantic similarity between $\rvh^\prime$ and $f_{\text{s}}(\rvx)$ is high. 
Similar ideas have been exploited in \cite{kang2020contragan,jeong2020training,zhang2021crossmodal}. Different from these methods, our model can utilize the pre-trained multi-modal feature space, which relieves the difficulty for discriminator in learning semantically meaningful features.


\begin{figure*}[ht!]
    \centering
   \includegraphics[width=0.8\linewidth]{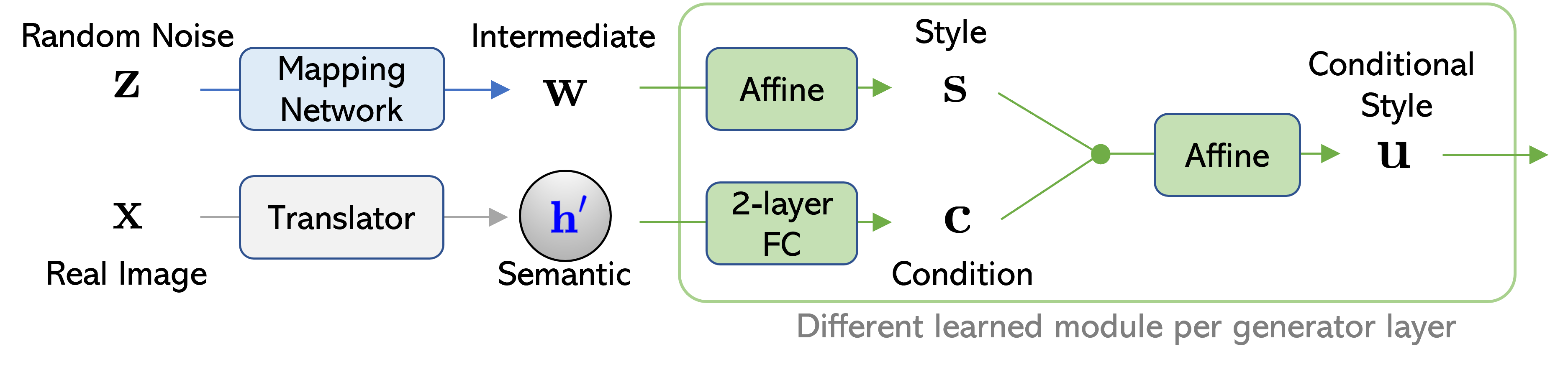}
   \vspace{-2mm}
    \caption{The process of injecting text-conditional information into each layer of the generator, where FC denotes fully-connected layer. The green modules have their own trainable parameters per generator layer. We can view the original StyleGAN2 constructs its StyleSpace as the process from $\rvz$ to $\rvs$. We propose to inject the semantic conditional information and further build our  Conditional StyleSpace, whose elements $\rvu$ will be used to modulate image generation. This figure illustrates the language-free setting, where real image is used to generate pseudo text feature $\rvh^\prime$; For the fully supervised text-to-image generation setting, real text is used for the extraction of text feature  $\rvh$. Please refer to the definition of translator in Section 3 for details.}
    \label{fig:generator}
\end{figure*}

\begin{figure*}[t!]
	\vspace{-2mm}\centering
	\begin{tabular}{c c c}
		\hspace{-3mm}
		\includegraphics[height=5cm]{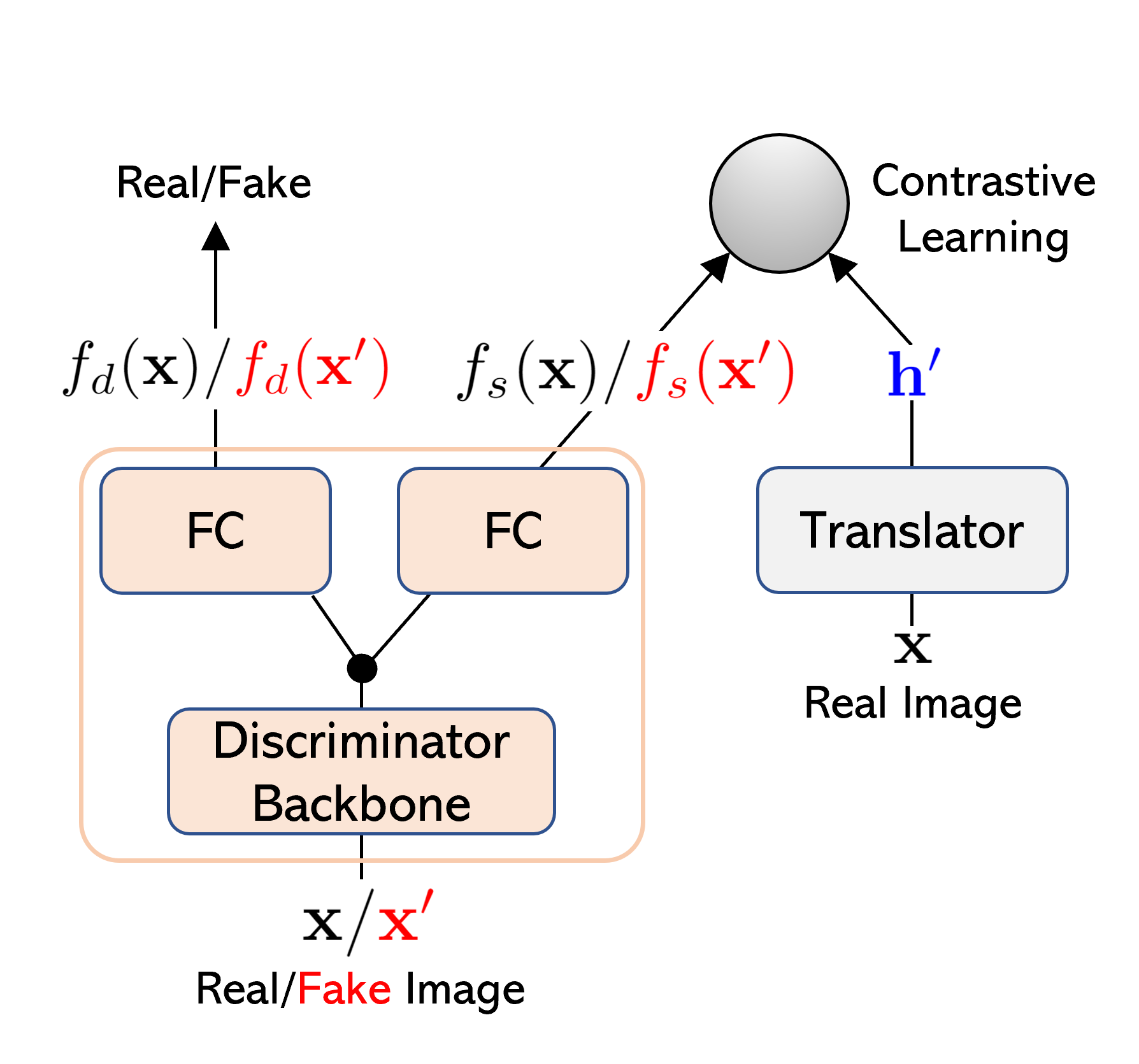}  & 
		\includegraphics[height=5cm]{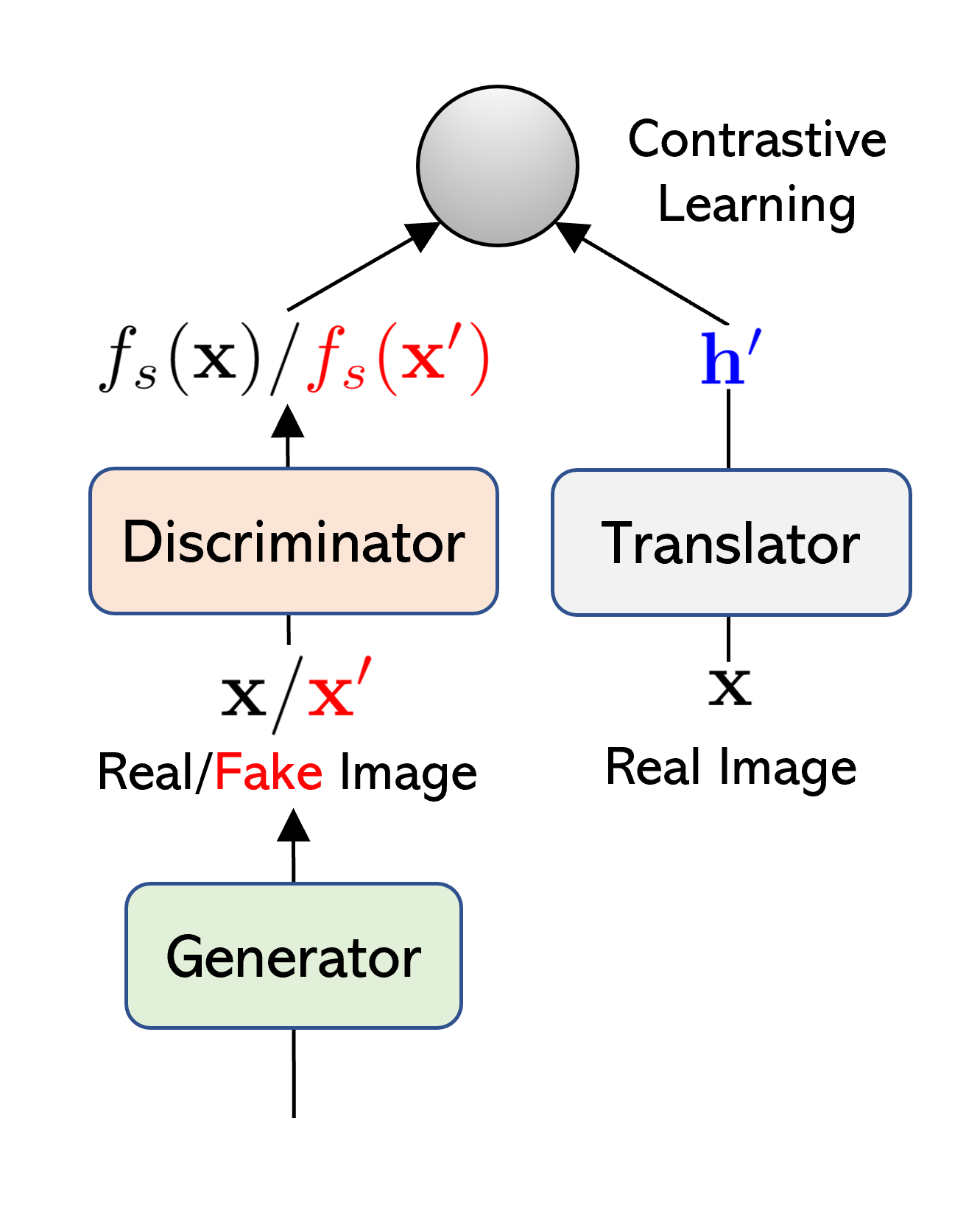}  &
		\includegraphics[height=5cm]{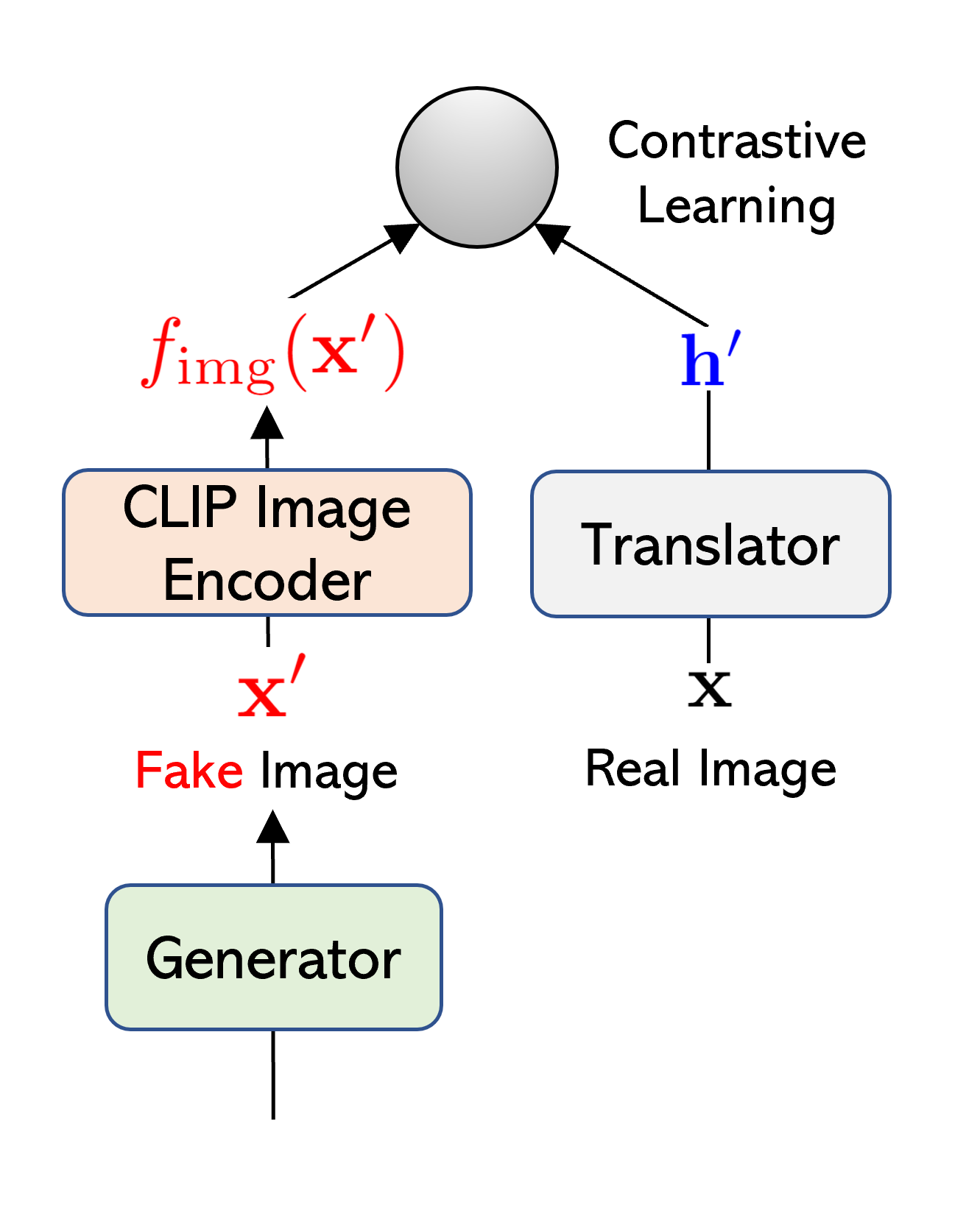}   \\
		(a) Discriminator output \vspace{0mm} & 
		(b) $\mathcal{L}_{\text{ConD}}$\hspace{-0mm} & 
		(c) $\mathcal{L}_{\text{ConG}}$ \hspace{0mm} 
	\end{tabular}
	\vspace{-2mm}
	\caption{Illustration of discriminator outputs and training objectives for the language-free setting.
	 }
	\label{fig:training_obj}
	\vspace{-1mm}
\end{figure*}
\subsection{Training Objectives}

For a mini-batch of $n$ images $\{\rvx_i\}_{i=1}^n$,
$\rvh^\prime _i$ is the corresponding generated pseudo text features for the $i$-th image.
Our model is trained in an adversarial manner, with additional contrastive losses to ensure that the GAN feature space is aligned with pre-trained CLIP. The first one is the standard conditional GAN loss. The losses for the generator and discriminator are defined, with the logits from \eqref{eq:logit}, as:
\begin{align}\label{eq:base_loss}
    \mathcal{L}_{\text{G}} &= - \sum_{i=1}^n \log \sigma(D(\rvx^\prime_i, \rvh_i^\prime)) ,  \\ 
    \mathcal{L}_{\text{D}} &= - \sum_{i=1}^n  \log \sigma(D(\rvx_i, \rvh_i^\prime)) - \sum_{i=1}^n  \log(1- \sigma(D(\rvx^\prime_i, \rvh_i^\prime))) \nonumber
\end{align}
where $\sigma(\cdot)$ denotes the Sigmoid function.

To enforce that the discriminator-extracted feature $f_{\text{s}}(\rvx)$ is semantically aligned in the pre-trained CLIP feature space, we consider the following contrastive regularizer for the discriminator:

\begin{align}\label{eq:d_contrastive_loss}
    \mathcal{L}_{\text{ConD}} = - \tau \sum_{i=1}^n \log \dfrac{\exp( \mathrm{Sim}(f_{\text{s}}(\rvx _i), \rvh_i^\prime)/\tau)}{\sum_{j=1}^n \exp( \mathrm{Sim}(f_{\text{s}}(\rvx _j), \rvh_i^\prime)/\tau)}, 
\end{align}
%
where $\mathrm{Sim}$ denotes the cosine similarity, $\tau$ is a non-negative hyper-parameter. Intuitively, $\mathcal{L}_{\text{ConD}} $ enforces the discriminator to output image feature $f_{\text{s}}(\rvx _i)$ that is similar to the corresponding text feature $\rvh^\prime_i$.

We further utilize the pre-trained CLIP model to improve the semantic correspondence of the generated images $\rvx_i^\prime$ and its conditioned pseudo text feature $\rvh ^\prime_i$. We define the following contrastive loss for the generator with the same hyper-parameter $\tau$ as \eqref{eq:d_contrastive_loss}:
\begin{align}\label{eq:clip_contrastive_loss}
   \hspace{-0.3cm}\mathcal{L}_{\text{ConG}} = - \tau \sum_{i=1}^n \log \dfrac{\exp(\mathrm{Sim}(f_{\text{img}}(\rvx^\prime _i), \rvh_i^\prime)/\tau)}{\sum_{j=1}^n \exp(\mathrm{Sim}(f_{\text{img}}(\rvx^\prime _j), \rvh_i^\prime)/\tau)}.
\end{align}

With the above contrastive regularizers, the final training loss for the generator and discriminator are defined as: 
\begin{align}
    \mathcal{L}_{\text{D}}^\prime & = \mathcal{L}_{\text{D}}  + \gamma \mathcal{L}_{\text{ConD}} \label{eq:final_loss_d} \\
    \mathcal{L}_{\text{G}}^\prime &  = \mathcal{L}_{\text{G}} +  \gamma \mathcal{L}_{\text{ConD}} + \lambda \mathcal{L}_{\text{ConG}}   \label{eq:final_loss_g}
\end{align}
where $\tau=0.5, \lambda =  \gamma = 10 $ for language-free settings, and $\tau=0.5, \lambda = 10 $,  $\gamma = 5 $ for fully-supervised settings\footnote{Details about hyper-parameter tuning are provided in the Appendix.}.

\subsection{Training Details}
\begin{algorithm}[t!]
    \caption{Language-free training of \shortname{}}\label{algo:lafite}
    \begin{algorithmic}[1]
        \STATE {\bfseries Input: An image dataset $\{\rvx_i\}_{i=1}^N$, pre-trained encoders $f_{\text{txt}}, f_{\text{img}}$, hyper-parameters $\tau>0$} 
    \WHILE { {\em not converge} }{
        \STATE Sample mini-batch $\{\rvx_i\}_{i=1}^n$;
        \STATE Sample perturbation noise $\{\mathbf{\epsilon}_i\}_{i=1}^n \sim \mathcal{N}(\mathbf{0}, \mathbf{I})$;
        \STATE {\color{blue} //  Pseudo text feature generation}
        \STATE Generate $\rvh^\prime _i$ according to \eqref{eq:adaptive_gaussian} or \eqref{eq:txt_fts_nn}; 
        
        \STATE {\color{blue} //  Forward pass of G and D}
        \STATE Sample latent noise $\{\mathbf{\rvz}_i\}_{i=1}^n \sim \mathcal{N}(\mathbf{0}, \mathbf{I})$;
        \STATE Synthesize fake image $\rvx_i^\prime$ with G using \eqref{eq:generator};
        \STATE Feed real/fake images to D using \eqref{eq:logit};
        \STATE {\color{blue} //  Update G and D with gradient descent}
        \STATE Update D with \eqref{eq:final_loss_d}; 
        \STATE Update G with \eqref{eq:final_loss_g}; 
        }
    \ENDWHILE
    \end{algorithmic}
\end{algorithm}

We summarize the language-free training schedule of \shortname{} in Algorithm~\ref{algo:lafite}. For the settings with full image-text pairs, one may replace pseudo text feature generation step with the ground-truth text feature $\rvh=f_{\text{txt}}(\rvt)$. 

\paragraph{Pre-training.}  To demonstrate the zero-shot task transfer ability of our model, we also consider a variant that is pre-trained on the Google Conceptual Captions 3M (CC3M) dataset \cite{Sharma2018ConceptualCA}, which consists of 3.3 millions of image-text pairs. For pseudo text-feature generation with trainable perturbation, we also train its inference model on CC3M. There is no image  overlapping between the pre-training and downstream datasets, which ensures the fairness when comparing our method against others in transfer learning. For face domain, we pre-trained a model on FFHQ dataset \cite{karras2019style} which contains 70,000 images. The pre-trained models can be fine-tuned with \shortname{} under language-free setting on different datasets, which will be discussed in next section.

\paragraph{Data augmentation.} In practice, we also consider image data augmentation to improve extracted image features $f_{\text{img}}(\rvx)$ in \eqref{eq:adaptive_gaussian}. We choose to use random cropping and avoid using augmentations like color transformation, because they may lead to mismatching between $\rvh^\prime$ and $\rvx$. The details are summarized in Appendix.


\section{Experiments}\label{sec:exp}

As the proposed \shortname{} is a versatile system, we conduct experiments under different settings, including the proposed language-free setting, as well as the zero-shot and fully-supervised text-to-image generation settings.
Due to the difference of two schemes to generate pseudo text features described in Section~\ref{sec:text_feature}, we denote our system in two variants: fixed perturbations as  \shortnameg{} and trainable perturbations as \shortnamen{}, respectively. 
All of our experiments are conducted on 4 Nvidia Tesla V100 GPUs, implemented using Pytorch \cite{paszke2019pytorch}. CLIP-ViT/B-32 is used in our methods unless specified. All the codes and pre-trained models will be publicly available upon acceptance.

\paragraph{Datasets.} We consider a suite of datasets that are commonly used in literature~\cite{xu2018attngan, zhu2019dm, zhang2021crossmodal, ye2021improving}, including MS-COCO \cite{cho2014learning}, CUB \cite{WahCUB_200_2011}, LN-COCO \cite{pont2020connecting}, Multi-modal CelebA-HQ (MM CelebA-HQ) \cite{xia2021tedigan}. All the images are scaled to resolution $256 \times 256$. Statistics of these datasets are summarized in Table \ref{tab:datasets} in the Appendix.
\paragraph{Evaluation metrics.}  Following \cite{ramesh2021zero, ding2021cogview}, we report the blurred Fr\'echet Inception Distance (FID) \cite{heusel2017gans} and Inception Score (IS) \cite{salimans2016improved} on MS-COCO dataset, which are computed using 30,000 generated images with randomly sampled text from validation set. FID-$k$ means the FID is computed after blurring all the images by a Gaussian filter with radius $k$.

\begin{figure}
    \centering
    \includegraphics[width=0.95\linewidth]{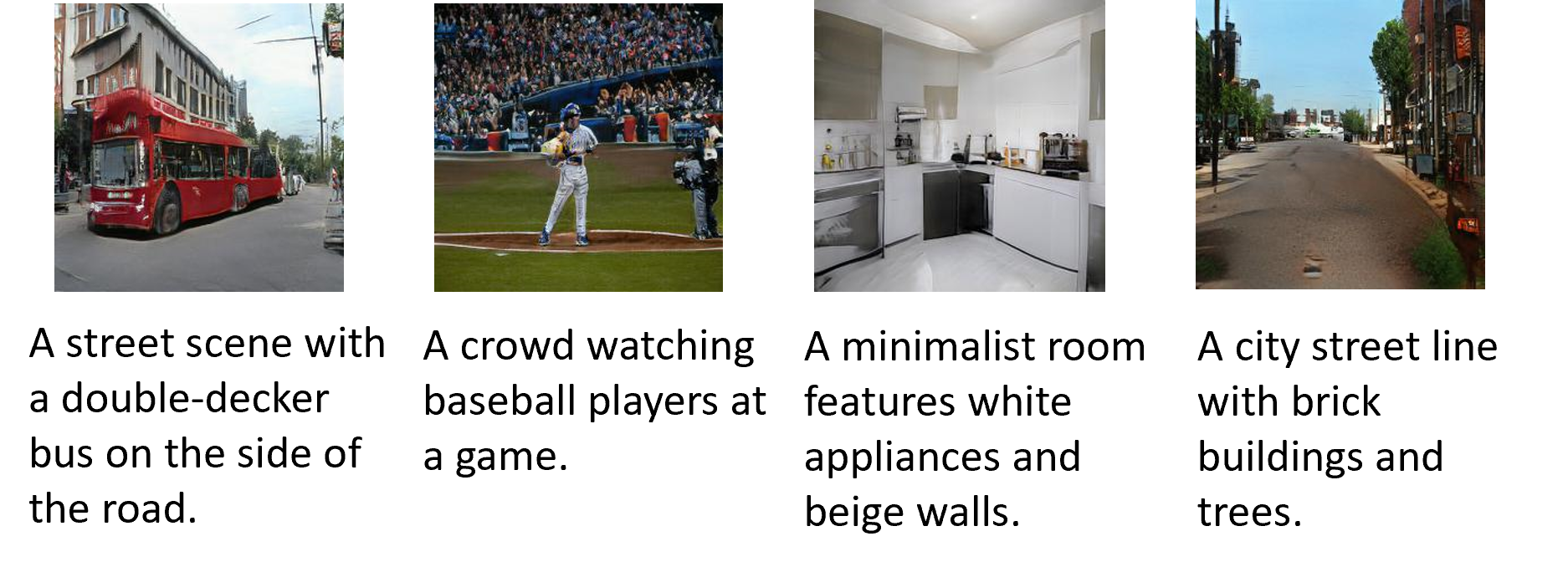}
    \caption{Language-free text-to-image generation examples on MS-COCO validation set. 
    }
	\vspace{-0.15in}
    \label{fig:generated}
\end{figure}

\begin{figure}
    \centering
    \includegraphics[width=\linewidth]{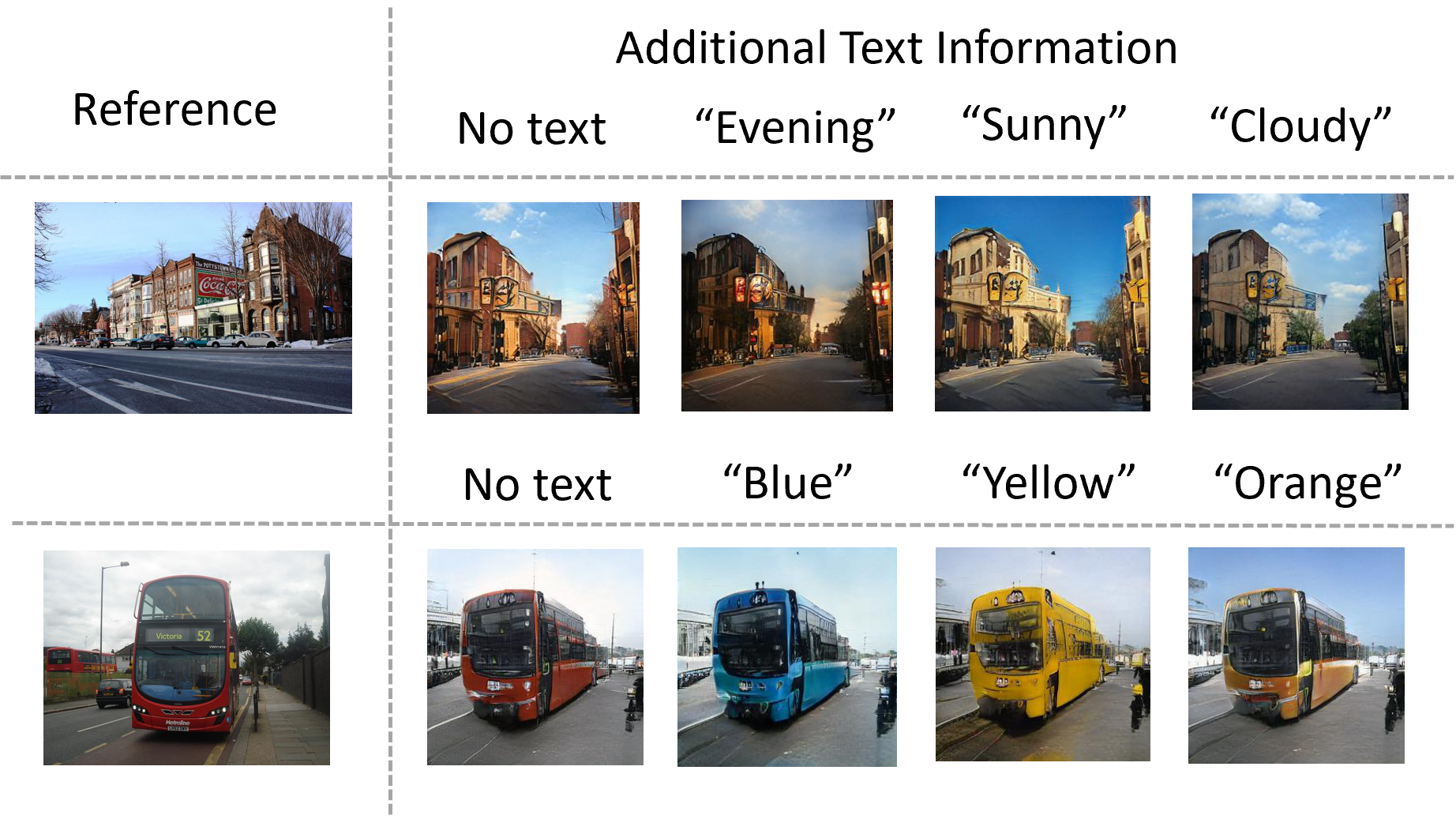}
	\vspace{-0.3in}
    \caption{Image generation with multi-modal conditions (conditioned on both image and text).}
    \label{fig:ms-coco_mixed_example}
\end{figure}

\subsection{Language-free Text-to-image Generation}
We first study \shortname{} under the proposed language-free setting, in which only images are provided in a given domain, and no paired caption is available during training.

\vspace{-0.5cm}
\paragraph{Captioning-based baseline:}
As a baseline, we employed the SoTA image captioning model VinVL \cite{zhang2021vinvl} to generate some associated captions for images. 
Note that MS-COCO image-text pairs were used to train the author-provided VinVL image captioning model, so the MS-COCO comparison is unfairly biased in favor of the baseline due to this information leakage. 
We compare this baseline method with our \shortname{} using the same network architecture and hyper-parameter setting for fairness. 
\begin{table}[t!]
\scalebox{0.76}{
    \begin{tabular}{lcccccc}
        \toprule
        Model & IS $\uparrow$ & FID-0  $\downarrow$ & FID-1  $\downarrow$ & FID-2  $\downarrow$ & FID-4  $\downarrow$ & FID-8  $\downarrow$\\
        \midrule
        Cap-Base & $15.83$ & $56.36$ & $54.99$ & $51.84$ & $44.81$ & $37.28$\\
        Cap-Large & $16.95$ & $47.21$ & $42.35$ & $37.85$ & $31.59$ &  $23.49$\\
        \shortnameg{} & $\mathbf{27.20}$ & $\mathbf{18.04}$ & $\mathbf{17.80}$ & $\mathbf{17.68}$ & $\mathbf{16.16}$ & $\mathbf{14.52}$ \\
        \shortnamen{} & $22.23$ & $26.56$ & $26.48$ & $25.82$ & $23.90$ & $19.27$\\
        \bottomrule
    \end{tabular}
    }
    \centering
    \caption{Results of language-free setting on MS-COCO dataset. `Cap' indicates a text-to-image generation baseline method based on VinVL captioning. 
    }    
    \label{tab:language_free_on_coco}
\end{table}
The main results are in Table \ref{tab:language_free_on_coco}. Both variants of our \shortname{} significantly outperform the captioning-based baseline method. The simple \shortnameg{} performs the best on this dataset, indicating the generality of the method. For \shortnamen{}, note that CC3M is used to train the inference model, thus there is no information leakage in \shortnamen{} method as we test \shortnamen{} on the MS-COCO dataset. Some generated examples are provided in Figure \ref{fig:generated}, from which we can see that our \shortname{} leads to text-aligned generation though no text data is used during training, verifying the effectiveness of the proposed method.

\begin{table*}[ht!]
\scalebox{0.9}{
    \begin{tabular}{l|cccccccc}
        \toprule
        Model & IS $\uparrow$ & FID-0  $\downarrow$ & FID-1  $\downarrow$ & FID-2  $\downarrow$ & FID-4  $\downarrow$ & FID-8  $\downarrow$ & SOA-C $\uparrow$ & SOA-I $\uparrow$\\
        \midrule
        DALL-E  & $17.90$ & $27.50$ & $28.00$ & $45.50$ & $83.50$ & $85.00$ & - & -\\
        CogView & $18.20$ & $27.10$ & $19.40$ & $13.90$ & $19.40$ & $23.60$ & - & -\\
        \shortname{}\lafetilogo{} & $\mathbf{26.02}$ & $\mathbf{26.94}$ & $22.97$ & $18.70$ & $\mathbf{15.72}$ & $\mathbf{14.79}$ & $37.37$ & $54.25$ \\
        \bottomrule
    \end{tabular}
    \centering
    \caption{Results of zero-shot setting on MS-COCO dataset, the model is pre-trained with image-text pairs from CC3M dataset.
    }    
    \label{tab:zero_shot_on_coco}
    }
\end{table*}

\begin{table*}[ht!]
\scalebox{0.9}{
    \begin{tabular}{l|cccc|cc|cc|cc}
        \toprule
        & \multicolumn{4}{c}{MS-COCO} & \multicolumn{2}{|c}{CUB} & \multicolumn{2}{|c}{LN-COCO}& \multicolumn{2}{|c}{MM CelebA-HQ}\\
        Model & IS $\uparrow$ & FID $\downarrow$ & SOA-C  $\uparrow$ & SOA-I  $\uparrow$ & IS $\uparrow$ & FID $\downarrow$ & IS $\uparrow$ & FID $\downarrow$ & IS $\uparrow$ & FID $\downarrow$ \\
        \midrule
        AttnGAN  & $23.61$ & $33.10$ &$25.88$ & $39.01$ & $4.36$ & $23.98$ & $20.80$ & $51.80$ & - & 125.98 \\
        Obj-GAN & $24.09$ & $36.52$ &$27.14$ & $41.24$ & - & - & - & - & - & -\\
        DM-GAN & $32.32$ & $27.34$ &$33.44$ & $48.03$& $4.75$ & $16.09$ & - & - & - & 131.05\\
        OP-GAN & $27.88$ & $24.70$ & $35.85$ & $50.47$ &- & - & - & - & - & -\\
        DF-GAN  & - & $21.42$  &-&-& $5.10$ & $14.81$ & - & - & - & 137.60\\ 
        XMC-GAN  & $30.45$ & $9.33$ & $50.94$ & $71.33$&- & - & $28.37$ & $14.12$ & - & - \\
        \midrule
        \shortname{}\lafetilogo{} & $\mathbf{32.34}$ & $\mathbf{8.12}$ & $\mathbf{61.09}$ & $\mathbf{74.78}$& $\mathbf{5.97}$ & $\mathbf{10.48}$ & $26.32$ & $\mathbf{11.78}$ & $\mathbf{2.93}$ & $\mathbf{12.54}$    \\
        \bottomrule
    \end{tabular}
    \centering
    \caption{Standard text-to-image generation on CUB, LN-COCO and MM CelebA-HQ datasets.
    }    
    \label{tab:t2i_main_results}
    }
\end{table*}

Furthermore, we can actually perform generation conditioned on images: For a given image, we generate an image-conditioned pseudo text feature vector with 
\shortname{}. Passing this pseudo text feature vector to $G$ leads to generated images that are similar to the given image. 
Consequently, \shortname{} enables image generation with multi-modal conditions, {\em i.e.} it can be conditioned on both image and text simultaneously. The implementation details are discussed in the Appendix.
Some generated examples are provided in Figure \ref{fig:ms-coco_mixed_example}, more results are provided in the Appendix.

\subsection{Zero-Shot Text-to-image Generation}
Zero-shot is a setting to evaluate a pre-trained text-to-image generation model, without training the model on any of downstream data. MS-COCO dataset is used for evaluating our model pre-trained on CC3M. 
%
The main results are shown in Table \ref{tab:zero_shot_on_coco}.
Compared to DALL-E \cite{ramesh2021zero} and CogView \cite{ding2021cogview}, \shortname{} achieves better quantitative results in most cases. We also emphasize that our model has only 75 millions of trainable parameters, while DALL-E has over 12 billions of parameters.
Arguably, our pre-training dataset CC3M is much smaller\footnote{Though we acknowledge that \shortname{} is based on an off-the-shelf discriminate model CLIP, which is trained on 400 million image-text pairs}, compared to the pre-training dataset used in DALL-E, which contains 250 millions of image-text pairs.

\subsection{Standard Text-to-image Generation}

We now consider the standard text-to-image generation task, where all the ground-truth image-text pairs are provided during training. 
We compare \shortname{} against a series of competitive systems: AttnGAN \cite{xu2018attngan}, Obj-GAN \cite{li2019object}, DM-GAN \cite{zhu2019dm}, OP-GAN \cite{9184960}, DF-GAN \cite{tao2021dfgan} and XMC-GAN \cite{zhang2021crossmodal}. The main results evaluated by FID and IS on different datasets are provided in Table \ref{tab:t2i_main_results}. We also report the Semantic Object Accuracy (SOA) on MS-COCO following previous works \cite{9184960, zhang2021crossmodal}.
Results of competitive models are directly cited from the corresponding papers. 
It is clear that our proposed model consistently outperforms all other methods, creating new SoTA results in standard text-to-image generation.

\begin{table}[t!]
\scalebox{0.66}{
    \begin{tabular}{l|cc|cc|cc|cc}
        \toprule
         & \multicolumn{2}{c}{MS-COCO}  & \multicolumn{2}{|c}{CUB}& \multicolumn{2}{|c}{LN-COCO} & \multicolumn{2}{|c}{MM CelebA-HQ}\\
         Methods & IS $\uparrow$ & FID $\downarrow$ & IS $\uparrow$ & FID $\downarrow$ & IS $\uparrow$ & FID $\downarrow$ & IS $\uparrow$ & FID $\downarrow$ \\
         \midrule
          &\multicolumn{8}{c}{Training from Scratch} \\
         \shortnameg{} & $\mathbf{27.20}$ & $\mathbf{18.04}$ &$\mathbf{4.32 }$& $\mathbf{27.53}$ & $\mathbf{18.49}$ & $38.95$ & $2.78$ & $\mathbf{32.75}$  \\
         \shortnamen{} & $22.23$ & $26.56$ & $4.06$ & $46.32$ & $18.17$ & $\mathbf{36.19}$ & $\mathbf{2.89}$ & $50.34$ \\
         \midrule
         &\multicolumn{8}{c}{Fine-tuned from Pre-trained Model} \\
         \shortnameg{} & $24.89$ & $20.89$ & $\mathbf{6.13}$ & $\mathbf{35.99}$ & $19.32$ & $34.96$& $3.10$ & $\mathbf{15.74}$ \\
         \shortnamen{} & $\mathbf{26.55}$ & $\mathbf{17.44}$ & $4.36$ & $37.91$  &$\mathbf{20.02 }$& $\mathbf{33.76}$ & $\mathbf{3.19}$ & $29.42$\\
         \bottomrule
    \end{tabular}
    \centering
    \caption{Comparisons between two schemes for language-free training on different datasets.}    
    \label{tab:baselines}
	\vspace{-0.1in}
    }
\end{table}

\subsection{Adaptation of Pre-trained Models }

\paragraph{Language-free model fine-tuning.} Compared with existing works, one key advantage of the {\em pre-trained} \shortname{} model is that it naturally enables language-free model fine-tuning. The results are provided in Table \ref{tab:baselines}, where both \shortnameg{} and \shortnamen{} are investigated on different datasets. We see that fine-tuning from the pre-trained model generally outperform training from scratch. 
We also notice that performance of pre-trained \shortname{} largely depends on the domain gap in pre-training and fine-tuning datasets.
For example, \shortnamen{} sometimes obtains worse results than \shortnameg{}, especially when the fine-tuning dataset is dissimilar to CC3M, \ie, CUB and MM CelebA-HQ.
This indicates that the inference model used for generating text features may have biases, because it may over-fit to its training dataset CC3M.  

Pre-trained \shortname{}  is also highly training-efficient. For example, training from scratch with \shortname{} on MS-COCO dataset requires around 4 days to reach FID of 18, while fine-tuning only needs 3 hours. This becomes a critical advantage especially when we require several text-to-image generation models across different datasets.  

\paragraph{Semi-supervised fine-tuning.} Adaptation of pre-trained \shortname{} is sample-efficient. One interesting question is, how much percentage of image-text pairs do we need to outperform previous SoTA XMC-GAN on MS-COCO dataset? To answer this question, we conduct experiment in which only a portion of the images are associated with ground-truth text. Our model is first pre-trained using all the images under the language-free setting, then it is fine-tuned with varying percentages of image-text pairs.
The main results are summarized in Figure \ref{fig:curve_ms_coco}. Our method outperforms XMC-GAN on both IS and FID when less than half of total of the image-text pairs are employed. 

\begin{figure}[t!]
      \begin{subfigure}{0.47\linewidth}
        \includegraphics[width=1.\linewidth]{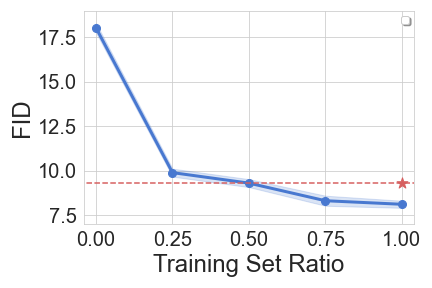}
        \caption{FID $(\downarrow)$}
        \label{fig:curve_fid}
      \end{subfigure}
      \begin{subfigure}{0.47\linewidth}
        \includegraphics[width=1.\linewidth]{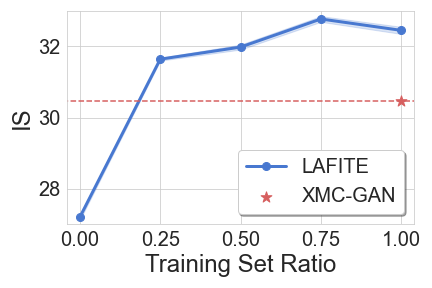}
        \caption{IS $(\uparrow)$}
        \label{fig:curve_is}
      \end{subfigure}
    \centering
    \caption{Comparison of \shortname{} and prior art XMC-GAN. X-axis is the percentage of image-text pairs in the full MS-COCO dataset. XMC-GAN has over 166 millions trainable parameters, while our \shortname{} only has 75 millions trainable parameters.}
    \label{fig:curve_ms_coco}
	\vspace{-0.1in}
\end{figure}


\subsection{Ablation Study}
\paragraph{Ablation study of training objectives}
We first investigate the impact of each component in our objective functions. The standard generator and discriminator losses are always employed, we ablate by excluding $\mathcal{L}_{\text{ConG}}$ and $\mathcal{L}_{\text{ConD}}$ one by one. The results are provided in Table \ref{tab:ablation_study_loss}. For both variants of \shortname{}, it is observed the model performance could drop significantly.


\begin{table}[t!]
\scalebox{0.8}{
    \begin{tabular}{c|cc|cccc}
        \toprule
          Model & $\mathcal{L}_{\text{ConG}}$ &  $\mathcal{L}_{\text{ConD}}$& IS $\uparrow$ & FID $\downarrow$ & SOA-C  $\uparrow$ & SOA-I  $\uparrow$\\
         \midrule
             \rotatebox{90}{\multirow{2}{*}{\hspace{-13mm} \bf \shortnameg{} }}
            &
             &  & $14.79$ & $33.03$& $9.64$ & $18.40$\\
            & \checkmark & & $17.78$ & $29.65$ & $16.53$ & $30.33$\\ 
            & &\checkmark & $22.28$ & $21.25$ & $29.09$ & $43.77$\\ 
             & \checkmark & \checkmark & $\mathbf{27.20}$ & $\mathbf{18.04}$ & $\mathbf{36.84}$ & $\mathbf{54.16}$\\
            \midrule
            \rotatebox{90}{\multirow{2}{*}{\hspace{-13mm} \bf \shortnamen{} }}
            & & & $11.05$ & $72.03$& $8.28$ & $14.46$\\
             &  \checkmark &  & $20.02$ & $30.67$ & $26.60$ & $41.26$  \\ 
             &   & \checkmark & $19.14$ & $33.88$ & $33.32$ & $49.86$\\ 
            &   \checkmark & \checkmark & $\mathbf{22.23}$ & $ \mathbf{26.48}$  & $\mathbf{36.86}$ & $\mathbf{54.02}$\\
         \bottomrule
    \end{tabular}
    }
    \centering
    \caption{Ablations of training losses on MS-COCO dataset, $\checkmark$ means the component is used during training.}
    \label{tab:ablation_study_loss}
    \vspace{-0.1in}
\end{table}

\begin{table}[t!]
\scalebox{0.76}{
    \begin{tabular}{@{\hspace{-0pt}}l|@{\hspace{8pt}}c@{\hspace{2pt}}@{\hspace{5pt}}c@{\hspace{5pt}}c@{\hspace{5pt}}c@{\hspace{5pt}}c}
        \toprule
         Model & Feature dim & IS $\uparrow$ & FID $\downarrow$ & SOA-C  $\uparrow$ & SOA-I  $\uparrow$ \\
         \midrule
            RoBERTa-Base &  $768$ & $15.95$ & $29.55$ & $11.58$ & $22.89$\\
            RoBERTa-Large & $1024$ & $14.11$ & $35.77$ & $7.72$ & $16.03$\\
            CLIP(B-32) Text encoder & $512$ & $24.54$ & $16.21$ & $47.74$ & $61.86$\\
            CLIP(B-16) Text encoder & $512$ & $24.90$ & $15.97$ & $47.80$ & $62.71$\\
            CLIP(B-32) & $512$ & $31.88$ & $8.62$ & $59.51$ & $73.76$\\
            CLIP(B-16) & $512$ & $\mathbf{32.34}$ & $\mathbf{8.12}$ & $\mathbf{61.09}$ & $\mathbf{74.78}$ \\
         \bottomrule
    \end{tabular}
    }
    \centering
    \caption{Results of using different pre-trained models on MS-COCO dataset.}    
    \label{tab:ablation_study_roberta}
    \vspace{-0.1in}
\end{table}

\paragraph{Ablations of pre-trained text/image encoders}
To demonstrate the importance of using a multi-modal feature-aligned pre-trained model in our \shortname{}, we  compare the CLIP model and other single-modality models. 
We adopt the popular RoBERTa \cite{liu2019roberta} as the baseline text encoder, which was  trained on a large text corpus only. 
Note that it is infeasible to perform language-free training without the joint feature space. Thus this experiment is based on fully-supervised text-to-image generation setting. 
For a fair comparison, we also report the results of only using the text encoder of CLIP while discarding the image encoder. 
In this setting, there is no image encoder thus the $\mathcal{L}_{\text{ConG}}$ term is removed from the objective function consequently.  
The results are reported in Table \ref{tab:ablation_study_roberta}. As expected, even if the image encoder of CLIP is not used, models with only CLIP text encoder still significantly outperform models using RoBERTa. From the results, we can conclude that: $(i)$ The feature space of CLIP is semantically meaningful for text-to-image generation, thus only  using text encoder of CLIP still leads to better results than RoBERTa; $(ii)$ Text-to-image generation results can be improved by using a feature-aligned joint feature space (CLIP vs others),
and can be further improved with a stronger joint space (CLIP-ViT/B-16 outperforms CLIP-ViT/B-32, where ViT/B-16 and ViT/B-32 are different designs of visual transformers \cite{dosovitskiy2020image}). 

\section{Conclusion}
We have presented \shortname{}, an approach to build text-to-image generation systems without domain-specific image-text pairs in training. We achieve the goal by resorting to generating pseudo text features from images. 
Excellent performance  in a variety of text-to-image generations tasks have demonstrated the effectiveness of \shortname{}, including language-free, zero-shot and fully supervised settings. In particular, \shortname{} creates new SoTA in zero-shot setting, with only 1\% trainable parameter counts compared with recent advances such as DALL-E/CogView. \shortname{} also outperforms prior arts in the fully-supervised settings.
We believe that language-free training is a promising direction to enable broader application areas for text-to-image generation, as it significantly lowers the burden on data collection. One interesting future direction is to explore image synthesis in the wild, where long tail and open set conditions are provided for generation.

{\small
\bibliographystyle{ieee_fullname}
\bibliography{egbib}
}

\clearpage

\appendix
\section{Appendix}

\subsection{Theoretical Results}
\begin{theorem}\label{thm:threshold}
For a given threshold $c>0$, the generated text feature by \shortnameg{} satisfies $\text{Sim}(f_{\text{img}}(\rvx_i), \rvh^\prime_i) \geq c$ with probability at least
\begin{align*}
   &\text{Prob}(\text{Sim}(f_{\text{img}}(\rvx_i), \rvh^\prime_i) \geq c) \\= & 1 - \int_{-1}^{(c-1)/\xi+c} \dfrac{\Gamma(d/2+1/2)}{\sqrt{\pi} \Gamma(d/2)} (1-x^2)^{d/2-1} \text{d}x 
\end{align*}
where $d$ is the dimension number of features, $\Gamma(z) = \int_0^{\infty}x^{z-1}e^{-x}\text{d}x$ is the Gamma function.
\end{theorem}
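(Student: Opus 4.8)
The plan is to reduce the similarity condition to a statement about the angle between a uniformly random unit vector and a fixed direction, then invoke the classical formula for the surface-area distribution of that angle on the sphere $S^{d-1}$. First I would write out $\text{Sim}(f_{\text{img}}(\rvx_i), \rvh^\prime_i) = \langle f_{\text{img}}(\rvx_i)/\Vert f_{\text{img}}(\rvx_i)\Vert_2,\ \tilde{\rvh}_i/\Vert \tilde{\rvh}_i\Vert_2\rangle$, using the definition \eqref{eq:adaptive_gaussian}. Write $\rvv = f_{\text{img}}(\rvx_i)/\Vert f_{\text{img}}(\rvx_i)\Vert_2$ for the unit image feature and $\rvu = \mathbf{\epsilon}/\Vert\mathbf{\epsilon}\Vert_2$ for the normalized Gaussian noise; note $\rvu$ is exactly uniform on $S^{d-1}$ by rotational invariance of the standard Gaussian. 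Then $\tilde{\rvh}_i = \Vert f_{\text{img}}(\rvx_i)\Vert_2 (\rvv + \xi \rvu)$, so the scalar prefactor cancels in the cosine similarity and we get $\text{Sim} = \langle \rvv, \rvv + \xi\rvu\rangle / \Vert \rvv + \xi\rvu\Vert_2 = (1 + \xi\langle\rvv,\rvu\rangle)/\sqrt{1 + 2\xi\langle\rvv,\rvu\rangle + \xi^2}$, since $\Vert\rvv\Vert_2 = \Vert\rvu\Vert_2 = 1$.

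Next I would let $t = \langle \rvv, \rvu\rangle \in [-1,1]$ and analyze $g(t) = (1+\xi t)/\sqrt{1 + 2\xi t + \xi^2}$. The goal is to find the threshold $t_0$ such that $g(t) \geq c \iff t \geq t_0$. Checking monotonicity: a direct differentiation shows $g$ is increasing in $t$ on $[-1,1]$ for $\xi > 0$ (the numerator of $g'$ reduces to something proportional to $1 - \xi^2 \cdot(\text{stuff})$... actually it works out that $g'(t) > 0$), so the event $\{g(t)\geq c\}$ is an interval $\{t \geq t_0\}$. Solving $g(t_0) = c$: square both sides, $(1+\xi t_0)^2 = c^2(1 + 2\xi t_0 + \xi^2)$, expand and solve the resulting quadratic (or, as the theorem statement suggests, it is actually linear after cancellation) to obtain $t_0 = (c-1)/\xi + c$ — I would verify this algebraically, being careful about the squaring step and confirming the relevant root lies in $[-1,1]$ and corresponds to $g$ increasing through $c$ from below. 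This matches the upper limit of integration in the statement.

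Finally I would invoke the standard fact that for $\rvu$ uniform on $S^{d-1}$ and fixed unit $\rvv$, the coordinate $t = \langle\rvv,\rvu\rangle$ has density $\frac{\Gamma(d/2)}{\sqrt\pi\,\Gamma((d-1)/2)}(1-t^2)^{(d-3)/2}$ on $[-1,1]$; matching this to the integrand $\frac{\Gamma(d/2+1/2)}{\sqrt\pi\,\Gamma(d/2)}(1-x^2)^{d/2-1}$ in the statement amounts to a relabeling of the dimension parameter (their ``$d$'' corresponds to $d+1$ in the classical formula, i.e. they work with $S^{d}$), so I would just state the density in the form used and integrate. Then $\text{Prob}(\text{Sim}\geq c) = \text{Prob}(t \geq t_0) = 1 - \int_{-1}^{t_0} (\text{density})\,dx$, which is precisely the claimed expression. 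The main obstacle is the algebra in Step 2: verifying that $g$ is monotone on the whole interval and that the squaring does not introduce a spurious root, so that the single clean expression $t_0 = (c-1)/\xi + c$ is indeed the exact cutoff rather than merely a bound; everything else is either rotational invariance or the textbook marginal-of-the-sphere computation.
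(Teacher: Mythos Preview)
Your reduction to the random variable $t=\langle \rvv,\rvu\rangle$ is the right starting point and matches the paper. The gap is in Step~2. You assert that $g(t)=(1+\xi t)/\sqrt{1+2\xi t+\xi^2}$ is increasing on all of $[-1,1]$ and that $g(t_0)=c$ solves to $t_0=(c-1)/\xi+c$. Neither holds. A direct computation gives $g'(t)=\xi^2(t+\xi)/(1+2\xi t+\xi^2)^{3/2}$, so for $0<\xi<1$ the function decreases on $[-1,-\xi]$ and increases on $[-\xi,1]$; it is not globally monotone. And squaring $g(t_0)=c$ yields the genuine quadratic
\[
\xi^2 t_0^2 + 2\xi(1-c^2)t_0 + (1-c^2-c^2\xi^2)=0,
\]
which does \emph{not} collapse to a linear equation; plugging $t_0=(c-1)/\xi+c$ into $g$ gives $c(1+\xi)/\sqrt{2c-1+2c\xi+\xi^2}\neq c$ in general. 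So the exact level set $\{g(t)\ge c\}$ is not $\{t\ge (c-1)/\xi+c\}$, and your argument as written would prove an equality that is false.

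The paper never solves $g(t)=c$. Instead it uses the triangle inequality $\|\rvv+\xi\rvu\|_2\le 1+\xi$ to get the \emph{lower bound}
\[
g(t)\;\ge\;\frac{1+\xi t}{1+\xi},
\]
whose right-hand side is linear in $t$ and exceeds $c$ precisely when $t\ge (c-1)/\xi+c$. Hence $\{t\ge (c-1)/\xi+c\}\subseteq\{g(t)\ge c\}$, which is exactly why the theorem is stated as ``with probability \emph{at least}'' rather than as an equality. After this, the CDF of $t$ on the sphere (the part you cite correctly) finishes the proof. Your plan can be repaired simply by replacing the attempted exact solve with this one-line triangle-inequality bound.
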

\begin{proof}
Without loss of generality, we omit the subscript for clearness.
\begin{align*}
    & \text{Sim}(f_{\text{img}}(\rvx), \rvh^\prime) \\
    = & \dfrac{\langle f_{\text{img}}(\rvx), \rvh^\prime \rangle }{\Vert  f_{\text{img}}(\rvx)\Vert_2 \Vert \rvh^\prime \Vert_2}\\
    = & \dfrac{\langle f_{\text{img}}(\rvx), f_{\text{img}}(\rvx) + \xi \epsilon \Vert f_{\text{img}}(\rvx) \Vert_2 / \Vert \epsilon \Vert_2 \rangle }{\Vert  f_{\text{img}}(\rvx)\Vert_2 \Big\Vert f_{\text{img}}(\rvx) + \xi \epsilon \Vert f_{\text{img}}(\rvx) \Vert_2 / \Vert \epsilon \Vert_2 \Big \Vert_2} \\
    = & \dfrac{\Vert f_{\text{img}}(\rvx)\Vert^2 + \xi \epsilon^\intercal f_{\text{img}}(\rvx)  \Vert f_{\text{img}}(\rvx) \Vert_2 / \Vert \epsilon \Vert_2 }{\Vert  f_{\text{img}}(\rvx)\Vert_2 \Big\Vert f_{\text{img}}(\rvx) + \xi \epsilon \Vert f_{\text{img}}(\rvx) \Vert_2 / \Vert \epsilon \Vert_2 \Big\Vert_2}
\end{align*}
Denote $a = f_{\text{img}}(\rvx)/ \Vert f_{\text{img}}(\rvx) \Vert_2, b = \epsilon / \Vert \epsilon \Vert_2$, then we have
\begin{align*}
    & \text{Sim}(f_{\text{img}}(\rvx), \rvh^\prime) \\
    = & \dfrac{\Vert f_{\text{img}}(\rvx)\Vert^2 + \xi \epsilon^\intercal f_{\text{img}}(\rvx)  \Vert f_{\text{img}}(\rvx) \Vert_2 / \Vert \epsilon \Vert_2 }{\Vert  f_{\text{img}}(\rvx)\Vert_2 \Big\Vert f_{\text{img}}(\rvx) + \xi \epsilon \Vert f_{\text{img}}(\rvx) \Vert_2 / \Vert \epsilon \Vert_2 \Big\Vert_2}\\
    = & \dfrac{1 + \xi a^\intercal b }{\Vert a  + \xi b \Vert _2}\\
    \geq & \dfrac{1 + \xi a^\intercal b }{\Vert a \Vert_2 + \xi \Vert b \Vert_2 }\\
     = & \dfrac{1 + \xi a^\intercal b}{1 + \xi}
\end{align*}
Consequently,
\begin{align*}
    & \text{Prob}(\text{Sim}(f_{\text{img}}(\rvx_i), \rvh^\prime_i) \geq c) \\
    \geq & \text{Prob}(\dfrac{1 + \xi a^\intercal b}{1 + \xi} \geq c) \\
     = & \text{Prob}(1 + \xi a^\intercal b \geq c + c \xi) \\
      = & \text{Prob}(a^\intercal b \geq (c - 1 +  c \xi)/\xi) 
\end{align*}

By the cumulative distribution function (CDF) of inner product of random vectors on sphere \cite{cho2009inner}, we know that
\begin{align*}
     \text{Prob}(a^\intercal b \leq z) = \int_{-1}^z  \dfrac{\Gamma(d/2 + 1/2)}{\sqrt{\pi} \Gamma(d/2)}(1-x^2)^{d/2 - 1} \text{d}x
\end{align*}
where $d$ is the dimension number of features, $\Gamma(z) = \int_0^{\infty}x^{z-1}e^{-x}\text{d}x$ is the Gamma function. Thus we have
\begin{align*}
    & \text{Prob}(\text{Sim}(f_{\text{img}}(\rvx_i), \rvh^\prime_i) \geq c) \\
    \geq & \text{Prob}(a^\intercal b \geq (c - 1 +  c \xi)/\xi) \\
     = & 1 - \int_{-1}^{(c-1)/\xi+c} \dfrac{\Gamma(d/2+1/2)}{\sqrt{\pi} \Gamma(d/2)} (1-x^2)^{d/2-1} \text{d}x, 
\end{align*}
which completes the proof.
\end{proof}

\subsection{Experiment Details}

\paragraph{Datasets} The statistics of datasets are summarized in Table \ref{tab:datasets}.

\begin{algorithm}[ht!]
    \caption{Image feature extraction process}\label{algo:algorithm_image_feature}
    \begin{algorithmic}[1]
        \STATE {\bfseries Input: An image dataset $\{\rvx_i\}_{i=1}^N$, image resolution $w \times w$, pre-trained $f_{\text{img}}$, hyper-parameters $a>0, k\geq 1$} 
        \STATE {\color{blue} //  Image feature generation}
        \FOR{$i=1$ {\bfseries to} $n$}
        \IF{use data augmentation}{
            \STATE Initialize $\rvh^\prime _i \leftarrow \mathbf{0}$;
            \FOR{$j=1$ {\bfseries to} k}
                \STATE $\rvh^\prime _i \leftarrow \rvh^\prime _i + f_{\text{img}}(\text{CROP}(\rvx_i))$, where $\text{CROP}(\cdot)$ denotes randomly cropping image to be $w^\prime\times w^\prime$, $w^\prime$ is an integer randomly sampled from the range $\left[a,  w\right]$;
            \ENDFOR
            \STATE $\rvh^\prime _i \leftarrow \rvh^\prime _i/k$;}
        \ELSE
            \STATE Initialize $\rvh^\prime _i \leftarrow f_{\text{img}}(\rvx_i)$;
        \ENDIF
        \ENDFOR
    \end{algorithmic}
\end{algorithm}
\begin{table}[t!]
\scalebox{0.96}{
    \begin{tabular}{lccc}
        \toprule
          Dataset& \#train & \#validation & caption/image\\
         \midrule
         MS-COCO & 82k &40k & 5 \\
         CUB & 9k &3k & 10 \\
         LN-COCO & 134k & 8k & 1 \\
         MM CelebA-HQ & 24k & 6k & 10 \\
         \bottomrule
    \end{tabular}
    }
    \centering
    \caption{Statistics of datasets. The last column indicates ratio of captions vs images. 
    }    
    \label{tab:datasets}
\end{table}

\paragraph{Image feature extraction} In practice, we use random cropping as data augmentation when we extract the image features, which is presented in Algorithm \ref{algo:algorithm_image_feature}. The pseudo text features will be generated by perturb the average feature of augmented samples. In our implementation, we set $k=1, a=256$ to extract image features used in generating $\rvh ^\prime$, while we set $k=1, a=128$ in contrastive loss \eqref{eq:clip_contrastive_loss}.

\paragraph{Hyper-parameter} The hyper-parameters are selected based on the performance on MS-COCO dataset. Specifically, $\tau$ is selected from $\left[0.1, 0.2, 0.5, 1.0, 2.0 \right]$, $\lambda, \gamma$ are selected from $\left[0, 1, 2, 5, 10, 20, 50 \right]$. 

\paragraph{Exponential sharpening} In practice, we found that applying an extra exponential sharpening in contrastive loss makes it easier to reproduce the experiment results, i.e. we add an extra exponential operation right before the softmax function in \eqref{eq:d_contrastive_loss} and \eqref{eq:clip_contrastive_loss}. Our implementation can be found at \url{https://github.com/drboog/Lafite}.

\subsection{More Results}

We provide the implementation details of image generation with multi-modal conditions, an ablation study on discriminator, and more generated examples under language-free setting. 

\paragraph{Generation with multi-modal conditions}
To generate an image conditioned on both a reference image and text description, we first extract the text feature $\rvh_1$ from the given text, and pseudo text feature $\rvh^\prime_2$ from the image. Then $\rvh_1, \rvh^\prime_2$ will be feed into the pre-trained generator, leading to two conditional style codes $\rvu_1$ and $\rvu_2$. We construct a new conditional style code, whose elements are randomly selected from the corresponding elements in either $\rvu_1$ or $\rvu_2$. The new conditional style code will be fed into the generator to generate the desired image.

Note that generation conditioned on image is not reconstruction. Thus when only a reference image is provided, the generated image may have differences with the given image. However, they will share some visible characteristics that are semantic meaningful as illustrated in our examples.

\paragraph{Ablation study on discriminator}
We test the impact of each term of \ref{eq:logit} under language-free setting. The results are provided in Table \ref{tab:logit_ablation}, from which we can see that both terms are important, while the "real or fake" term seems to be more important.
\begin{table}[t!]
    \centering
    \begin{tabular}{cccccc}
    \toprule
    {\color{red}RoF} & {\color{red}SA} & FID $\downarrow$ & IS $\uparrow$ & SOA-C $\uparrow$ & SOA-I $\uparrow$\\
    \midrule
     \checkmark & & $24.85$ & $23.74$ & $ 30.54$ & $48.72$\\
         & \checkmark & $25.42$ & $21.14$ & $23.14$ & $38.32$\\ 
         \checkmark & \checkmark & $18.04$ & $27.20$ & $36.84$ & $54.16$\\
         \bottomrule
    \end{tabular}
    \caption{Ablation study on discriminator logits in language-free setting. {\color{red}RoF} denotes ``real or fake'' term, {\color{red}SA} denotes ``semantic alignment'' term.}
    \label{tab:logit_ablation}
\end{table}

\paragraph{Generated examples}
Some text-to-image generation results on CUB, MS-COCO, MM CelebA-HQ, LN-COCO are provided in the following figures.

\begin{figure*}[ht!]
    \centering
    \includegraphics[width=0.7\linewidth]{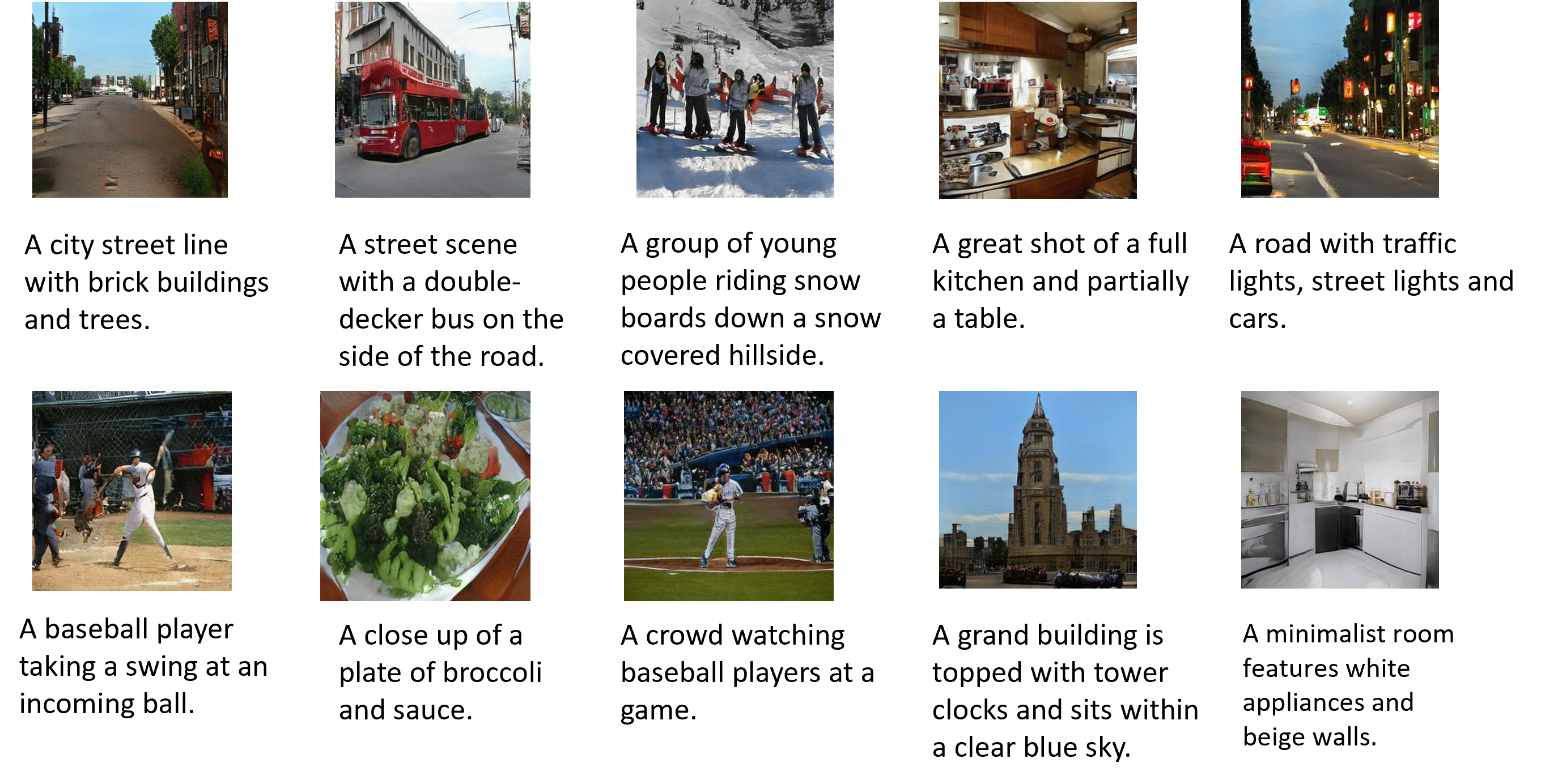}
    \caption{Generating examples on MS-COCO dataset.}
    \label{fig:ms-coco_generated}
\end{figure*}

\begin{figure*}[h!]
    \centering
    \includegraphics[width=0.7\linewidth]{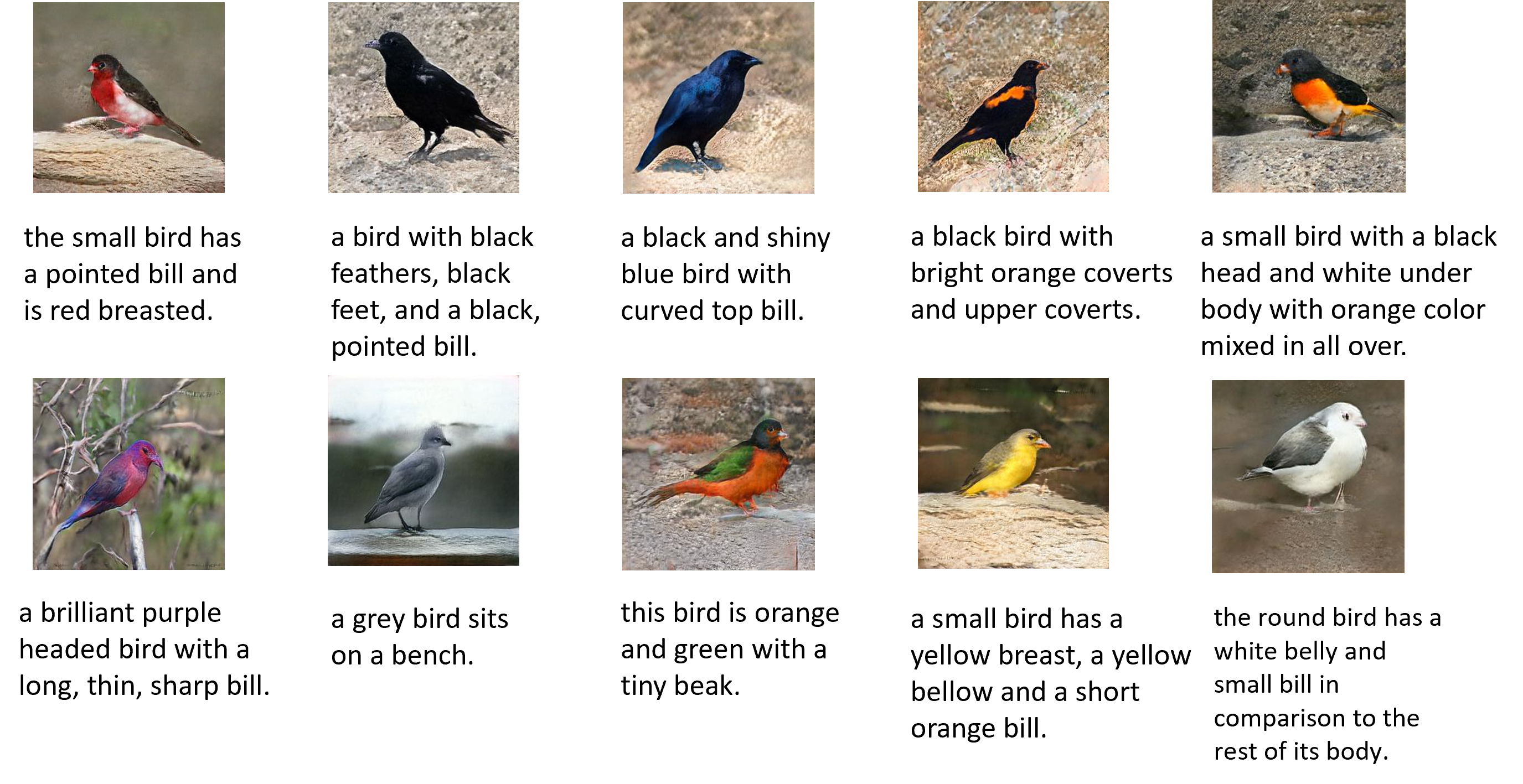}
    \caption{Generating examples on CUB dataset.}
    \label{fig:birds_generated}
\end{figure*}

\begin{figure*}[ht!]
    \centering
    \includegraphics[width=0.7\linewidth]{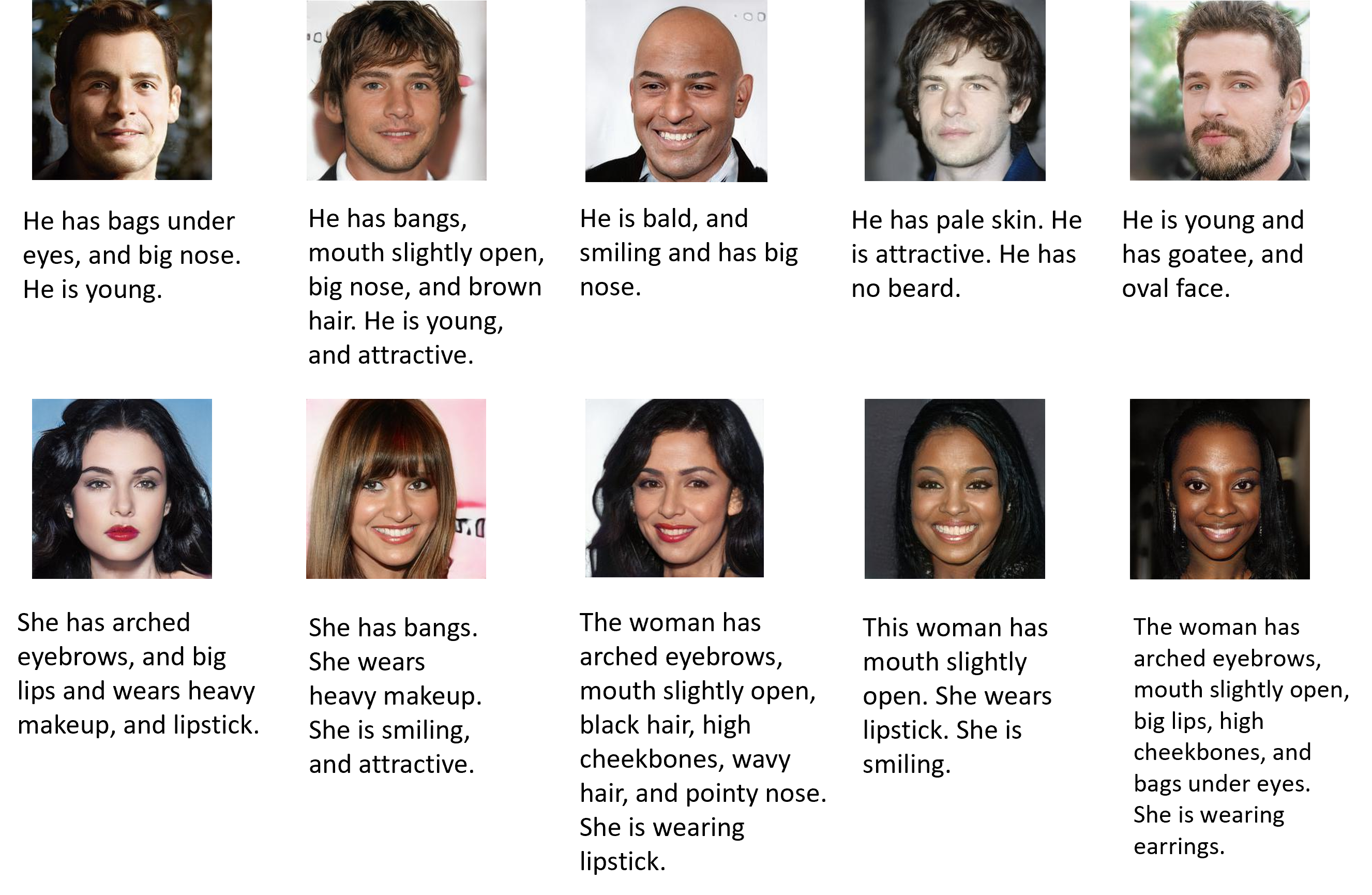}
    \caption{Generating examples on MM CelebA-HQ dataset.}
    \label{fig:celeba_generated}
\end{figure*}

\begin{figure*}[ht!]
    \centering
    \includegraphics[width=0.7\linewidth]{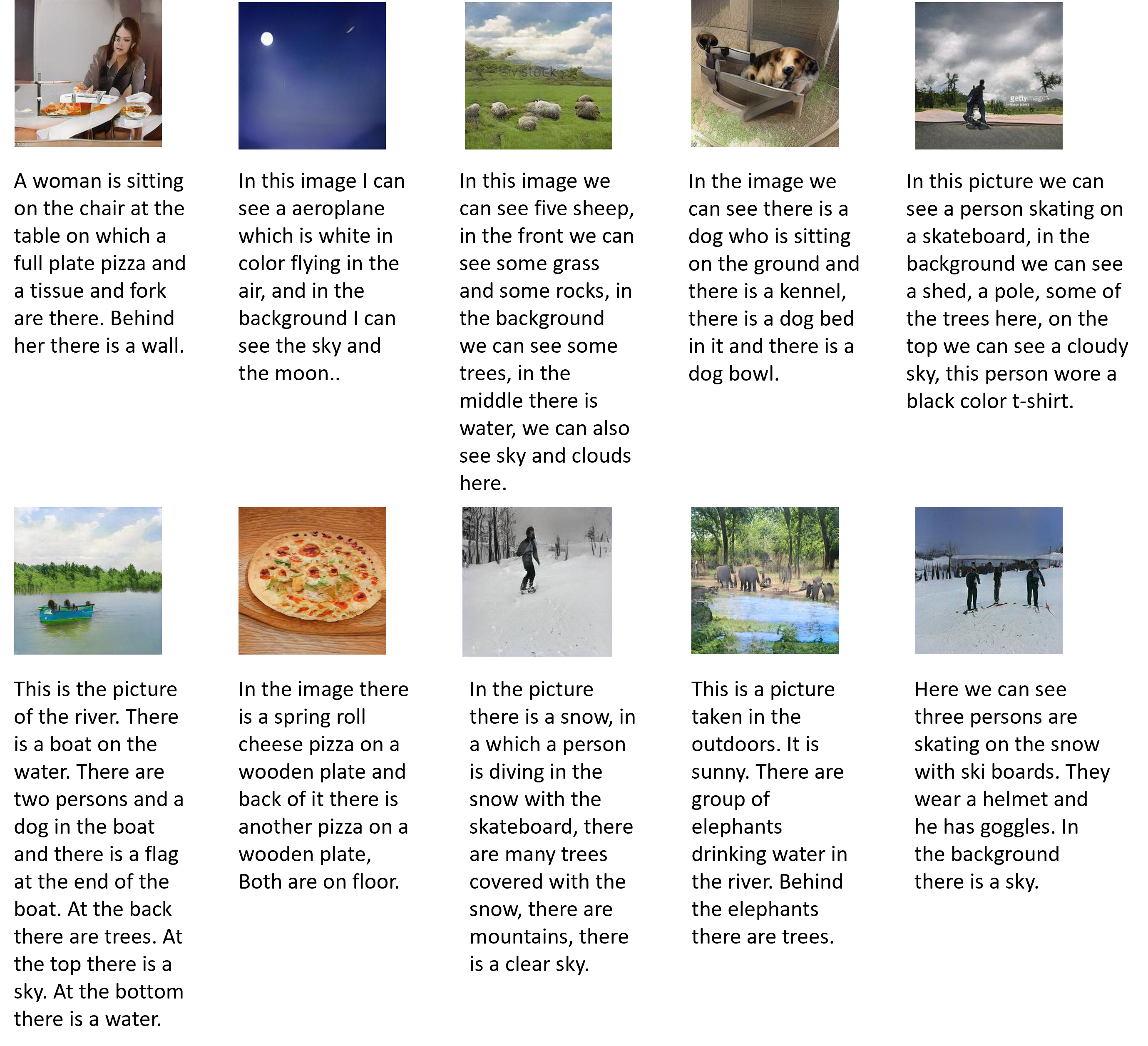}
    \caption{Generating examples on LN-COCO dataset.}
    \label{fig:ln-coco_generated}
\end{figure*}

\begin{figure*}[ht!]
    \centering
    \includegraphics[width=0.6\linewidth]{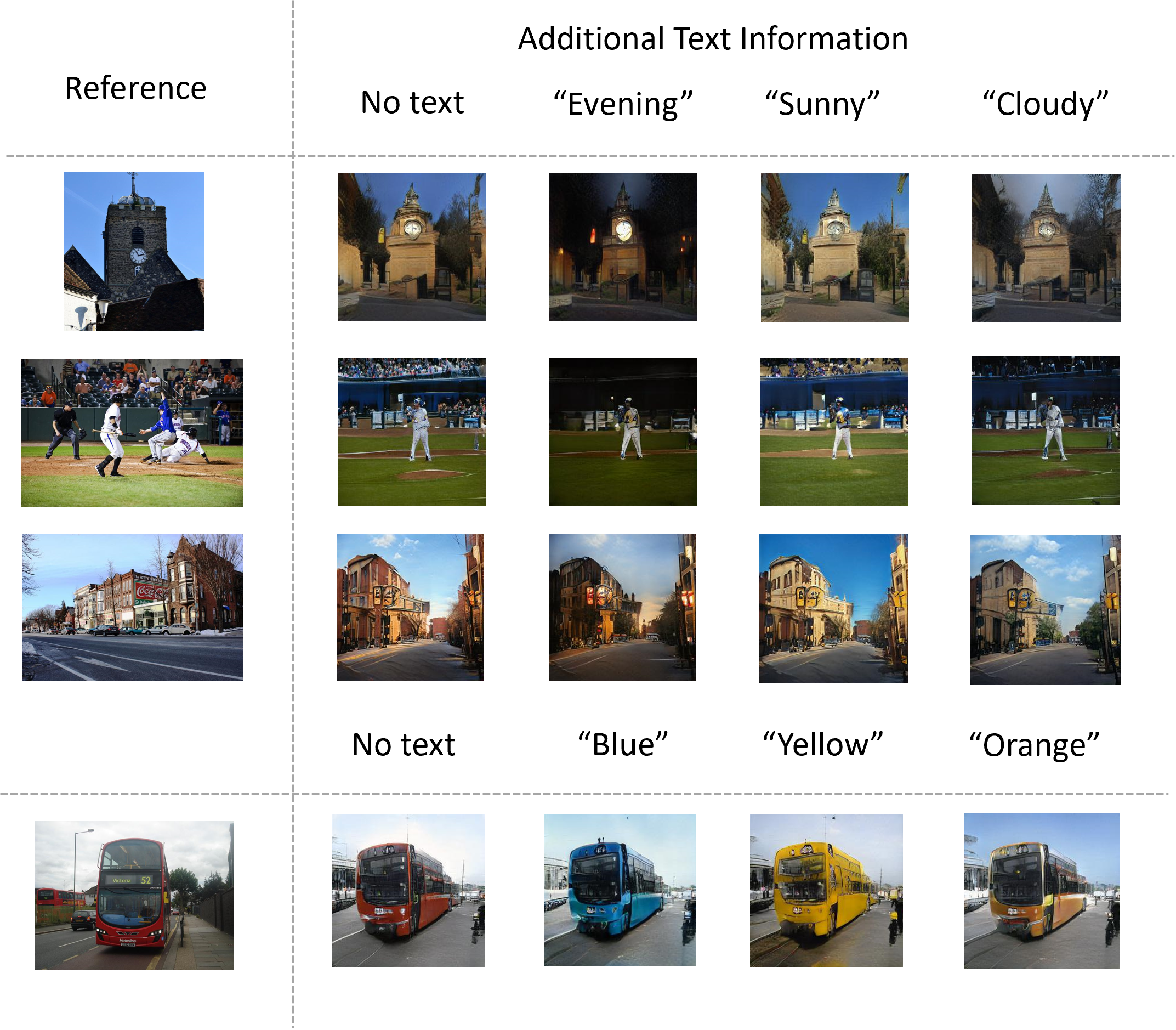}
    \caption{Generating images with multi-modal conditions (conditioned on both image and text) on MS-COCO dataset.}
    \label{fig:ms-coco_mixed}
\end{figure*}

\begin{figure*}[ht!]
    \centering
    \includegraphics[width=0.6\linewidth]{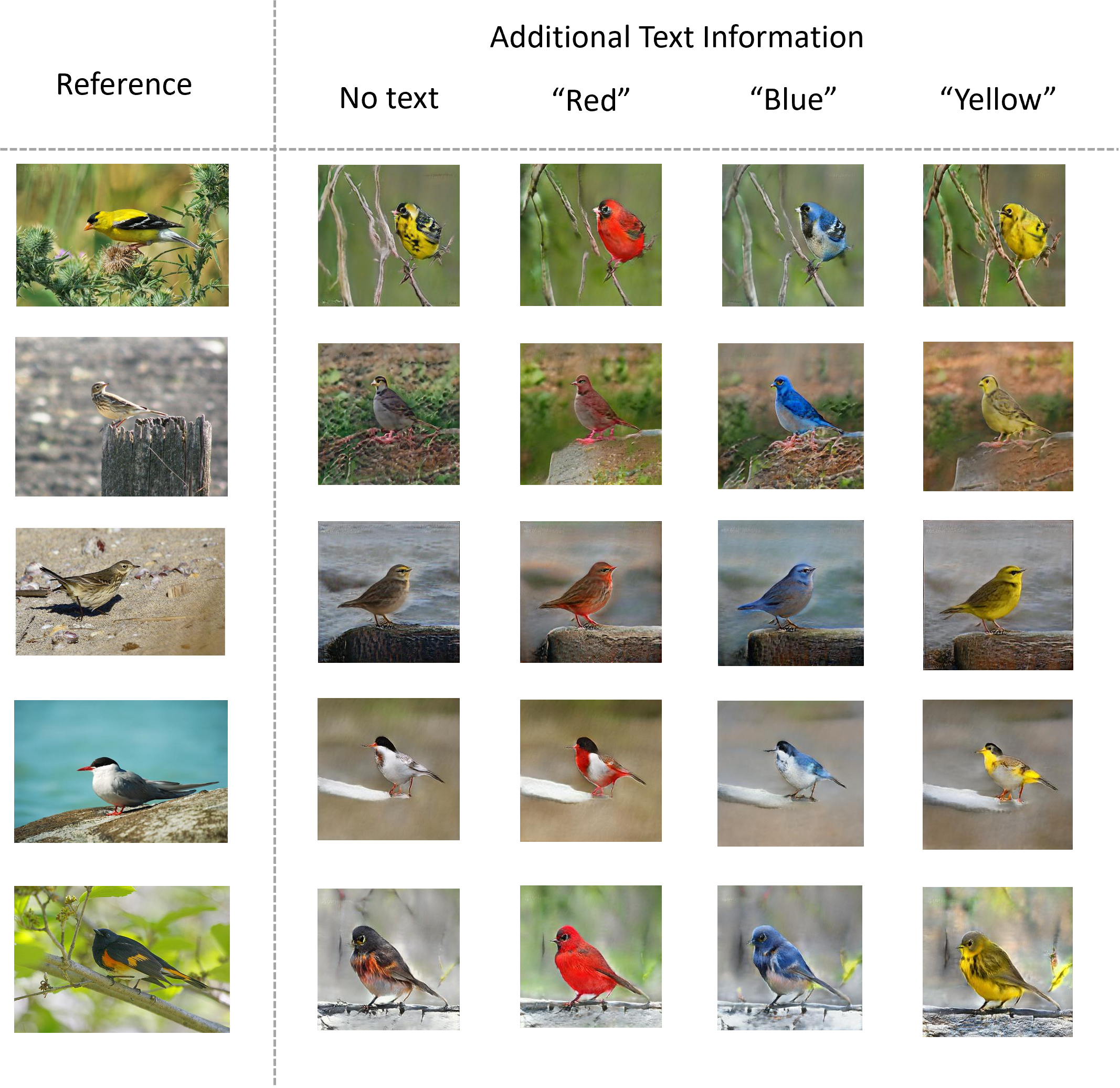}
    \caption{Generating images with multi-modal conditions (conditioned on both image and text) on CUB dataset.}
    \label{fig:birds_mixed}
\end{figure*}

\begin{figure*}[ht!]
    \centering
    \includegraphics[width=0.6\linewidth]{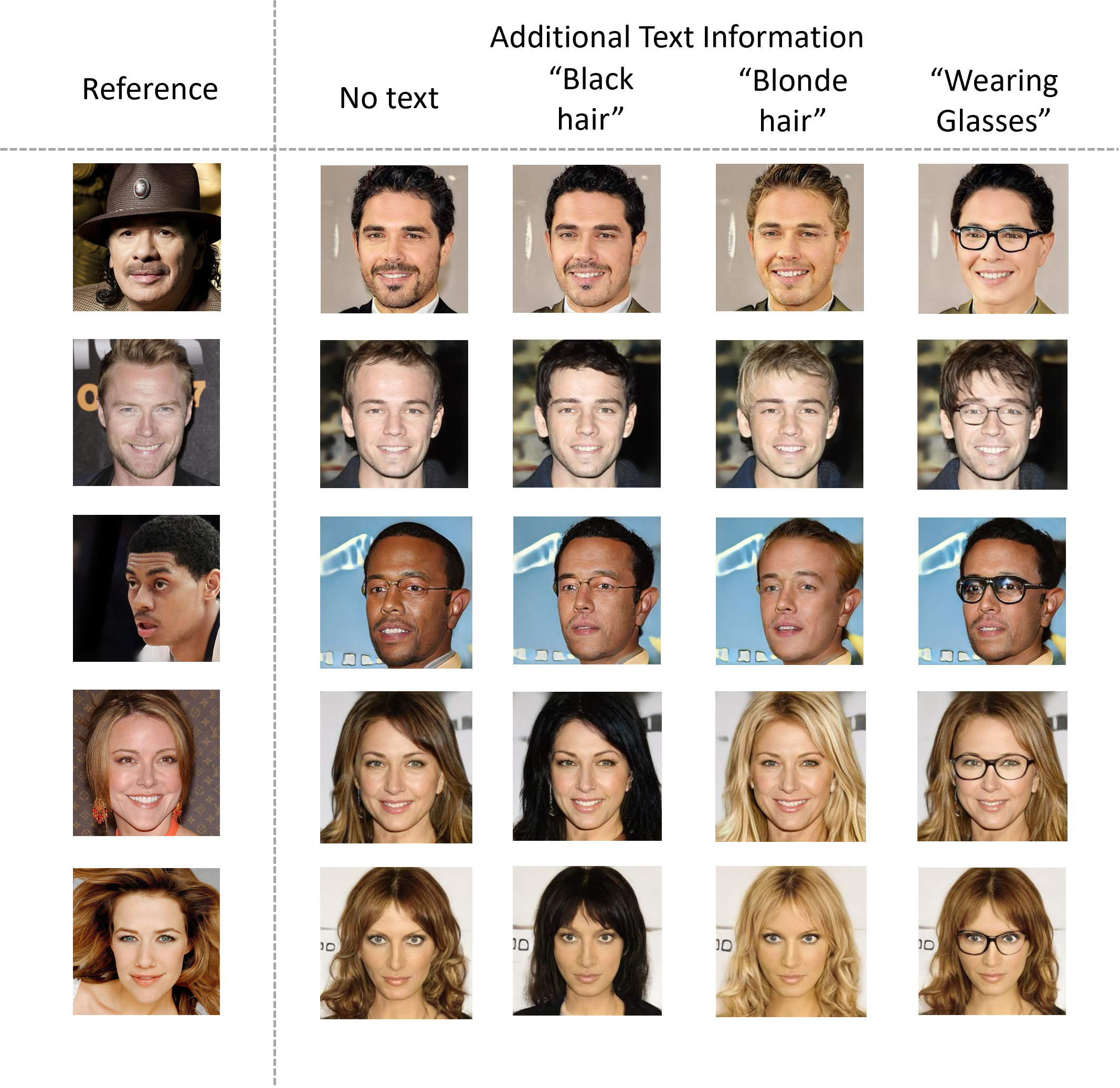}
    \caption{Generating images  with multi-modal conditions (conditioned on both image and text) on MM CelebA-HQ dataset.}
    \label{fig:celeba_mixed}
\end{figure*}

\begin{figure*}[ht!]
    \centering
    \includegraphics[width=0.6\linewidth]{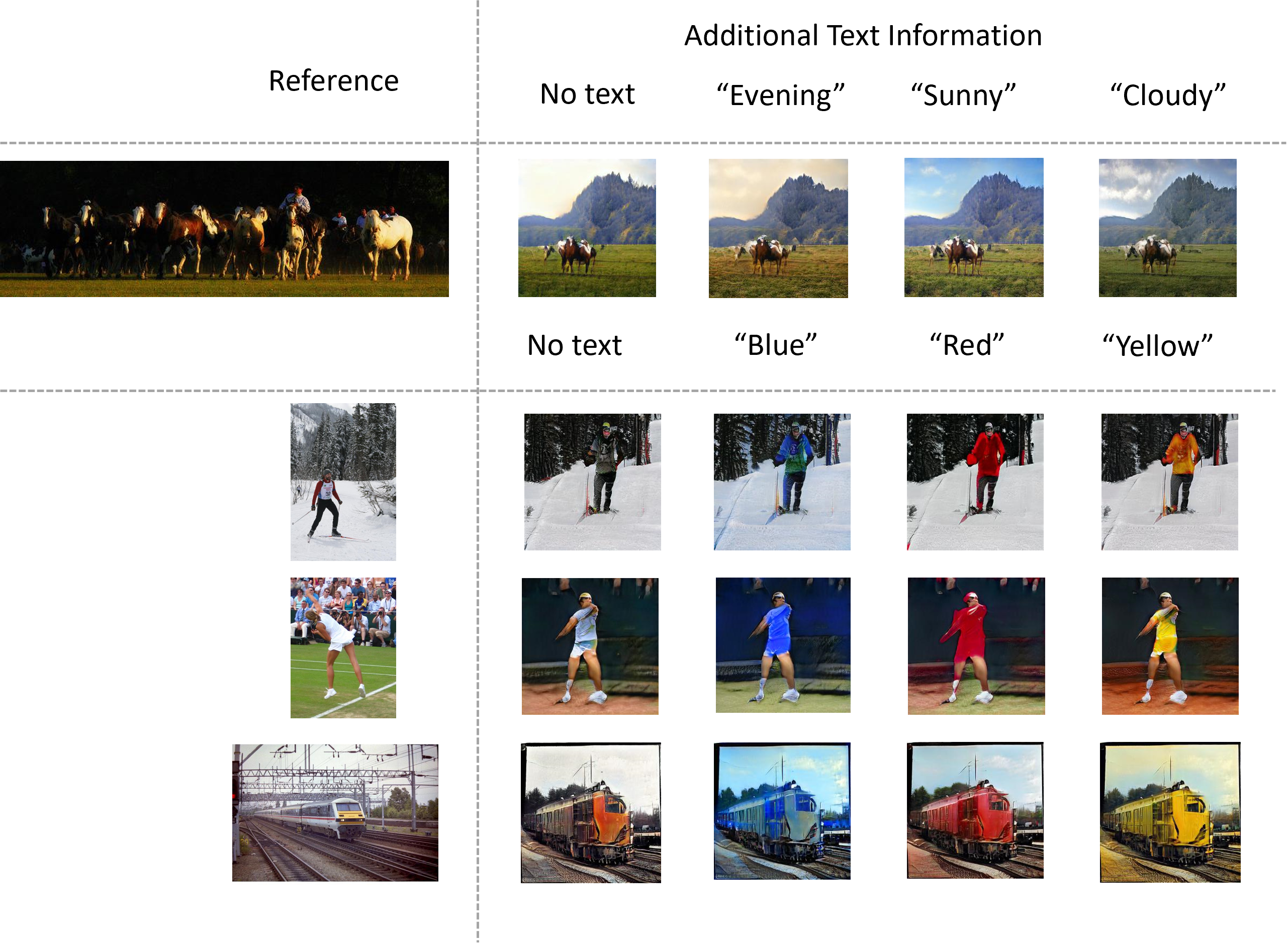}
    \caption{Generating images with multi-modal conditions (conditioned on both image and text) on LN-COCO dataset.}
    \label{fig:ln-coco_mixed}
\end{figure*}

\end{document}